\newcommand\pl{\stackrel{PL(p)}{=}}
\newcommand\plconv{\stackrel{PL(2)}{\longrightarrow}}
\def\Exp{\mathbb{E}\,}
\renewcommand{\tilde}{\widetilde}
\providecommand{\wh}{\widehat}
\newcommand{\bs}[1]{\mathbf{#1}}
\renewcommand{\bar}{\overline}
\providecommand{\Real}{\mathbb{R}}
\DeclareMathOperator*{\argmin}{argmin}
\newcommand{\tl}[1]{\widetilde{{#1}}}
\newcommand{\mc}[1]{\mathcal{#1}}
\def\tr{{\rm tr}}
\begin{document}

\title{Estimation of Embedding Vectors in High Dimensions}

\author{\name Golara Ahmadi Azar \email golazar@g.ucla.edu \\
       \name Melika Emami\thanks{Now with Optum AI Labs, work done while at UCLA}\email emami@g.ucla.edu \\
       \name Alyson Fletcher \email akfletcher@ucla.edu \\
       \addr University of California Los Angeles (UCLA),\\
       Los Angeles, CA USA
       \AND
       \name Sundeep Rangan \email srangan@nyu.edu \\
       \addr New York University (NYU),\\
       Brooklyn, NY USA}

\editor{}
\maketitle

\begin{abstract}
Embeddings are a basic initial feature extraction step
in many machine learning models, particularly
in natural language processing.  
An embedding attempts to map data tokens
to a low-dimensional space where
 similar tokens are mapped to vectors
that are close to one another by some
metric in the embedding space.
A basic question is how well can 
such embedding be learned?
To study this problem,
we consider a simple probabilistic model
for discrete data where there is some ``true" but unknown embedding where the  correlation
of random variables is related
to the similarity of the embeddings.
Under this model, it is shown that
the embeddings can be learned by a
variant of low-rank approximate
message passing (AMP) method.
The AMP approach enables precise
predictions of the accuracy of the estimation
in certain high-dimensional limits.
In particular, the methodology
provides insight on the 
relations of key parameters such
as the number of samples per 
value, the frequency of the terms,
and the strength of the embedding correlation
on the probability distribution.
 Our theoretical findings are validated by simulations on both synthetic data and real text data.
\end{abstract}

\begin{keywords}
AMP, Poisson channel, State Evolution, Embedding learning.
\end{keywords}

\section{Introduction}
\label{sec:introduction}
Embeddings are widely-used in machine learning tasks, particularly text processing \cite{Asudani2023}.
In this work, we study embedding learned on  pairs of discrete random variables, $(X_1,X_2)$, 
where
$X_1  \in [m] := \{1,\ldots,m\}$ and $X_2 \in [n] := \{1,\ldots,n\}$.
For example, in word embeddings, $X_1$ could represent a target word, and $X_2$ a context word (e.g., a second word found close to the target word) \cite{pennington2014glove}.
By an \emph{embedding}, we mean a pair of
mappings of the form:
\begin{equation}
    X_1=i \mapsto \bs{u}_i, 
    \quad
    X_2=j \mapsto \bs{v}_j,
\end{equation}
where $\bs{u}_i$ and $\bs{v}_j \in \Real^d$.
The embedding thus maps 
each value of the random variable
to an associated $d$-dimensional 
vector.  The dimension $d$ is called
the \emph{embedding dimension}.

Typically, (see e.g., \cite{pennington2014glove}), we try to learn embeddings such that $\bs{u}_i^\intercal \bs{v}_j$ is large
when the pair $(X_1,X_2)=(i,j)$
occurs more frequently. 
Many algorithms have been proposed for training such embeddings \cite{mikolov2013efficient,Stein_2019,pennington2014glove,joulin-etal-2017-bag}. While these algorithms have been successful in practice, precise convergence results are difficult to obtain. 
At root, we wish to understand how
well can embeddings be learned?

To study these problems, 
we propose a simple model for the joint distribution of $(X_1,X_2)$ where
\begin{equation} \label{eq:pcorr}
    \log\left[ \frac{P(X_1=i,X_2=j)}{P(X_1=i)P(X_2=j)} \right] \approx \frac{1}{\sqrt{m}}\bs{u}_i^\intercal \bs{v}_j,
\end{equation}
for some true embedding vectors $\bs{u}_i$ and $\bs{v}_j$.
The property \eqref{eq:pcorr}
indicates that
the pointwise mutual information (PMF) 
 of the events that
$X_1=i$ and $X_2=j$ is proportional
to the vector correlation 
$\bs{u}_i^\intercal \bs{v}_j$
in the embedding space so that a large $\bs{u}_i^\intercal \bs{v}_j$
implies that $(X_1,X_2)=(i,j)$ occurs relatively frequently.
The model also has parameters $s_i^u$ and $s_j^v$ such that
the marginal distributions (which we call the \emph{bias} terms) are given by
\begin{align}
    P(X_1=i) \propto  \exp(s_i^u), \quad 
     P(X_2=j) \propto  \exp(s_j^v).
\end{align}
The problem is to estimate the true bias terms 
and the embedding vectors from samples $(x_1,x_2)=(i,j)$.
We consider Maximum Likelihood (ML) estimation of the parameters.  In our probabilistic model,
the ML estimation can be approximated by a low-rank matrix factorization \cite{Kumar2017},
which are widely-used in learning embeddings \cite{pennington2014glove,Lee2000NMF}.

The low-rank matrix factorization is analyzed in a certain large system limit (LSL).
Specifically, the embedding dimension $d$ is fixed while the number of terms $n$
and $m$ (equivalent to the vocabulary size in word embeddings) 
and the average number of samples grow to infinity in a certain scaling.
The true bias and embedding parameters are generated randomly,
and we examine how well an approximation of ML estimation is able to
recover the parameters.  In practice, most embeddings
are learned via stochastic gradient descent or related algorithms.
In this work, we analyze a variant of low-rank approximate message passing (AMP)
methods.  Several AMP methods are available for
low-rank matrix factorization
(AMP-KM \cite{matsushita2013cluster}, IterFac\cite{DBLP:journals/corr/abs-1202-2759}, Low-rank AMP \cite{lesieur2015mmse}).  
The main benefit of the AMP
is that the framework enables precise predictions of the performance in the large
system limit.  

Our contributions are as follows:
\begin{itemize}
    \item \emph{Extension of low-rank AMP:}  Our method is most closely related
    to the low-rank AMP algorithms of \cite{lesieur2015mmse} that considers
    estimates of low-rank matrices under general non-Gaussian measurements.
    We show that this method, however, cannot directly be applied to the 
    problem of learning embeddings due to the presence of the bias terms
    $s_i^u$ and $s_j^v$.  We develop an extension for the low-rank AMP 
    that we call biased low-rank AMP that can account for the variations 
    due to the bias terms.  

    \item \emph{State evolution analysis:}  Similar to other AMP algorithms 
    \cite{Donoho_2009,bayati2011dynamics}, we provide a precise characterization
    of the joint distribution of the true vectors, the bias terms and
    their estimates.  The distribution is described in each iteration of the AMP
    algorithm through a \emph{state evolution} or SE.  
    From the joint distribution,
    one can evaluate various performance
    metrics such as mean squared error (MSE) or overlap of 
    the true and learned embedding vectors
    as well as the error in the learned
    joint probability distribution.
    The performance, in turn, can be related
    to key parameters such as the number
    of data samples per outcome $(i,j)$,
    the relative frequency of terms,
    and strength of the dependence of the
    embedding correlation $\bs{u}_i^{\intercal} \bs{v}_j$ on the correlation of events $X_1=i$ and $X_2=j$.

    \item \emph{Experimental results:}  The predictions from the
    SE analysis are validated on both synthetic datasets as well 
    as a text dataset from movie reviews \cite{maas2011sent}. 
    While the ``ground" truth embeddings vectors in the movie 
    dataset are not known, we propose a novel evaluation method,
    where we learn ``true" vectors from a large number of samples
    and then predict the performance on smaller numbers.
\end{itemize}

A shorter version of this paper was presented at the $58$th Annual Conference on Information Sciences and Systems (CISS) \cite{azar2024ciss}. In the current paper, we have provided significantly more details on the proofs, and added more experimental results supporting our hypothesis.

\textbf{Prior work:}  Learned  embeddings are widely-used in  applications in Natural Language Processing \cite{Asudani2023-lx}, Computer Vision \cite{Wu_2017_ICCV} (e.g. zero-shot learning \cite{bucher2016improving,Zhang_2017_CVPR}, contrastive learning \cite{Han_2021_CVPR}, and face recognition \cite{chopra2005face,schroff2015facenet}), graph and network representation learning \cite{Hiraoka2024node2vec,Fatemi2023hyper,Davison2023network}, surrogate loss function design \cite{Finocchiaro2024loss} and even biosignal based inference \cite{azar2024emb}. Despite the empirical success of the numerous embedding learning techniques (see \cite{Asudani2023-lx} and references therein), there is limited theoretical analysis of the asymptotic behavior of the learned embeddings \cite{grohe2020}, especially in high dimensional limits.

However, it is well-known that most embedding methods are closely-related
to finding low-rank matrix approximations  \cite{pennington2014glove,Lee2000NMF}.
AMP algorithms provide a tractable approach to rigorously analyzing low-rank 
estimation problems in high-dimensional limits.
AMP algorithms were originally developed for compressed sensing
problems \cite{Donoho_2009,ziniel2013csamp}. For example, authors of \cite{huang2022amppe} explore one/multi-bit compressive sensing problems via AMP where the signal and noise distribution parameters are treated as variables and jointly recovered. In \cite{ma2019amp}, authors present the AMP-SI algorithm that utilizes side information (SI) to aid in signal recovery using conditional denoisers.
These algorithms have also been widely-used in analysis
of low-rank estimation problems.
Early AMP-based low-rank estimation algorithms  
were introduced by  \cite{matsushita2013cluster}
and \cite{fletcher2018inference}.

AMP methods were proven to be optimal for the case of sparse PCA \cite{deshpande2014informationtheoretically}.
The work \cite{deshpande2015asymptotic} applied AMP to the stochastic block model which is a popular statistical model for the large-scale structure of complex networks.
Authors \cite{montanari2016npca} address the shortcomings of classical PCA in the high dimensional and low SNR regime. They use an AMP algorithm to solve the non-convex non-negative PCA problem. In \cite{Kabashima_2016}, the 
authors consider a general form of the problem at hand and provide the MMSE that is in principle achievable in any computational time. Specifically relevant to our study, \cite{lesieur2015mmse,Lesieur_2017} present a framework to address the constrained low-rank matrix estimation assuming a general prior on the factors, and a general output channel (a biased Poisson channel in our case) through which the matrix is observed. Noting that state evolution is uninformative when the algorithm is initialized near an unstable fixed point, \cite{montanari2019estimation} proposes a new analysis of AMP that allows for spectral initializations.  The main contribution
of the current work is to modify and apply these methods to the embedding learning problem.Finally, we would like to emphasize that our proposed method is to provide a framework that helps us understand the relations between key parameters in an estimation model featuring static embeddings and unknown biases, rather than providing an alternative to state of the art NLP algorithms \cite{devlin2019bert,radford2018improving}. 
\section{Problem Formulation}
\label{sec:problem-formulation}
\subsection{Joint Density Model for the Embedding}
\label{subsec:poisson}
As stated in the introduction,
we consider embeddings of pairs of
discrete random variables $(X_1,X_2)$
with $X_1 \in [m]$ and $X_2 \in [n]$
for some $m$ and $n$.
Let $P_i^{(1)} = P(X_1=i)$ and
$P_j^{(2)}=P(X_2=j)$ denote the marginal
distributions 
and $P_{ij}=P(X_i=i,X_2=j)$
denote the joint distribution.
We assume the joint distribution has the form,
\begin{align} \label{eq:Pij}
    P_{ij} = C \exp\left( \frac{1}{\sqrt{m}}\bs{u}_i^\intercal \bs{v}_j + s_i^u + s_j^v \right), 
\end{align}
where $\bs{u}_i, \bs{v}_j \in \Real^d$
are some ``true" embedding vectors,
$s_i^u$ and $s_j^v$ are scalars, 
and $C > 0$ is a normalization 
constant.  It can be verified that,
for large $m$, the marginal distributions
of $X_1$ are $X_2$ satisfy:
\begin{subequations}\label{eq:marginal}
    \begin{align}
    \log P_i^{(1)} = C_1 +  s_i^u + O(1/\sqrt{m}), \\
    \log P_j^{(2)} = C_2 +  s_j^v + O(1/\sqrt{m}),
\end{align}
\end{subequations}
where $C_1$ and $C_2$ are constants.
Hence, $s_i^u$ and $s_j^v$, which 
we will call the \emph{bias} terms,
represent the log likelihoods of the values.
Also, the PMF \eqref{eq:Pij}
satisfies the property 
\begin{align}
    \log \left[ \frac{P_{ij}}{P_i^{(1)}P_j^{(2)}} \right] = \frac{1}{\sqrt{m}}
    \bs{u}_i^\intercal \bs{v}_j + O(1/m),
\end{align}
Hence, the similarity $\bs{u}_i^\intercal \bs{v}_j$ represents the log of the correlation of the events that 
$X_1=i$ and $X_2=j$.


\subsection{Poisson Measurements}\label{sec:poisson}
The parameters to estimate in the model \eqref{eq:Pij} are:
\begin{equation} \label{eq:theta}
    \theta := (U, V, \bs{s}^u, \bs{s}^v),
\end{equation}
where $U$ and $V$ are the matrices
with embedding vectors $\bs{u}_i$ and $\bs{v}_j$, and $\bs{s}^u$ and $\bs{s}^v$ are the vectors
of the bias terms $s_i^u$ and $s_j^v$. 
To learn the parameters, we are given a set of samples, $(x_1^t,x_2^t)$, $t=1,\ldots, N$.  
Let
\begin{equation} 
    Z_{ij} = \left|\{ t ~|~ (x_1^t=i,x_2^t=j) \} \right|,
\end{equation}
which are the number of instances
where $(X_1,X_2)=(i,j)$.
If we assume that the samples
are independent and identicaly distributed (i.i.d.), with PMF \eqref{eq:Pij}
and the number of samples, $N$,
is Poisson distributed, then the measurements $Z_{ij}$
will be independent with distributions,
\begin{align}
    Z_{ij} \sim \mathrm{Poisson}( \lambda_{ij} )\quad, \quad 
    \lambda_{ij} = \lambda_0 \exp\left(\frac{1}{\sqrt{m}}
    \bs{u}_i^\intercal \bs{v}_j +  s^u_i + s^v_j \right), \label{eq:z_ij}
\end{align}
where $\lambda_0 = C\Exp(N)$.  

\section{AMP-Based Estimation}
\label{sec:algo}

\subsection{Regularized Maximum Likelihood}
\label{subsec:gaussian}

We consider estimating the parameters \eqref{eq:theta} with the minimization:
\begin{equation}
    \label{eq:thetamin}
    \wh{\theta} = \argmin_{\theta} L_0(\theta),
\end{equation}
where $L_0(\theta)$ is the regularized negative log likelihood:  
\begin{align}  
    L_0(\theta) := 
    -\sum_{ij} \log P_{\rm out}\left(Z_{ij}|\frac{1}{\sqrt{m}}\bs{u}_i^\intercal \bs{v}_j+ s_i^u+ s_j^v \right) + \phi_u(U) + \phi_v(V), \label{eq:loss_poisson}
\end{align}
and $P_{\rm out}(z|\log \lambda) := e^{-\lambda} \lambda^z/z!$ is the Poisson distribution \eqref{eq:z_ij}
and $\phi_u(U)$ and $\phi_v(V)$
are regularizers on the matrices of embedding vectors.  
We will assume the regularizers 
are row-wise separable meaning
\begin{equation} \label{eq:regsep}
    \phi_u(U) = \sum_{i=1}^m g_u(u_i),
    \quad
    \phi_v(V) = \sum_{j=1}^n g_v(v_j),
\end{equation}
for some functions $g_u(\cdot)$
and $g_v(\cdot)$. 
For example, we can use squared norm
regularizers such as:
\begin{align} \label{eq:l2reg}
    g_u(\bs{u}_i) :=\frac{\lambda_u}{2}\|\bs{u}_i\|^2,
    \quad
    g_v(\bs{v}_j) := \frac{\lambda_v}{2}\|\bs{v}_j\|^2,
\end{align}
for normalization constants $\lambda_u$ and $\lambda_v$.  Regularizers can 
also be used to impose sparsity.  
Sparsification is especially important when addressing the resource-intensive learning of pre-trained transformers and their applications in Natural Language Processing (e.g. see \cite{jaiswal2023sparsity}).

\subsection{Two step estimation}
The minimization \eqref{eq:thetamin} can be performed in practice through a variety
of methods such as stochastic gradient descent.  
However, these methods are difficult to directly analyze.  We thus consider a simpler to
analyze, but approximate two step method:
\begin{itemize}
    \item First, we estimate the bias terms $s_i^u$ and $s_j^v$ through a simple frequency counting; and
    \item Second, we estimate the embedding vectors through a modification of the low-rank AMP procedure of \cite{lesieur2015mmse,Lesieur_2017}.
\end{itemize}
The next two sub-sections describe each of these steps.

\subsection{Bias vector estimation}\label{sub:bias-est}
As the first step, we would like to estimate $s_i^u$ and $s_j^v$'s given measurements $Z_{ij}$.
Define:
\begin{equation} \label{eq:ruv_def}
    r_i^u := e^{s_i^u}, 
    \quad 
    r_j^v := e^{s_j^v}.
\end{equation}
Note that, by 
adjusting the bias terms $s_i^u$
or $s_j^v$, we will assume in the sequel, 
without loss of generality, that
in the model \eqref{eq:z_ij}
\begin{equation} \label{eq:bias_norm}
    \lambda_0 = 1, 
    \quad
    \frac{1}{m}\sum_{i=1}^m r_i^u = 1.
\end{equation}
Under the above assumption, we propose to estimate the bias terms with:
\begin{equation} \label{eq:sest}
    \wh{s}_i^u = \log(\wh{r}_i^u),
    \quad
     \wh{s}_j^v = \log(\wh{r}_j^v),
\end{equation}
where $\wh{r}_i^u$ and $\wh{r}_j^v$ are
estimates of $r_i^u$ and $r_j^v$ given by:
\begin{equation} \label{eq:rest}
    \wh{r}_i^u = \frac{m}{Z_{\rm tot}} \sum_{j=1}^n Z_{ij},
    \quad
     \wh{r}_j^v = 
     \frac{n}{Z_{\rm tot}} \sum_{i=1}^m Z_{ij} 
\end{equation}
and
\begin{equation}
     Z_{\rm tot} := \sum_{i=1}^m \sum_{j=1}^n
    Z_{ij}.
\end{equation}  
We note that based on \eqref{eq:rest} and \eqref{eq:ruv_def}, $e^{s_i^u}$ is proportional to the fraction of times $X_1=i$ occurs in the given samples. A similar argument holds for $e^{s_j^v}$ and frequency of $X_2=j$. 
\subsection{Biased Low-Rank AMP}
Ideally, having bias estimates $\wh{\bs{s}}^u$ and $\wh{\bs{s}}^v$ from the previous step, we would obtain estimates for $U$ and $V$ by minimizing:
\begin{equation} \label{eq:uvmin}
    \wh{U}, \wh{V} = \arg \min_{U,V} 
    L_0(U,V,\wh{\bs{s}}^u, \wh{\bs{s}}^v),
\end{equation}
where $L_0(\cdot)$ is the negative log likelihood in 
\eqref{eq:loss_poisson}.  To simplify
the notation, we will sometimes drop the dependence
on $\wh{\bs{s}}^u$ and  $\wh{\bs{s}}^v$, and write:
\begin{align} 
    \MoveEqLeft L_0(U,V) := 
    -\sum_{ij} \log P_{\rm out}\left(Z_{ij}|\frac{1}{\sqrt{m}}\bs{u}_i^\intercal \bs{v}_j + \wh{s}_i^u+\wh{s}_j^v\right) + \phi_u(U) + \phi_v(V).
        \label{eq:loss_prior}
\end{align}

To solve the minimization \eqref{eq:loss_prior}, one
could attempt to use prior AMP literature such as \cite{guionnet2023spectral,mergny2024fundamentallimit}.  However,
in  \eqref{eq:loss_prior}, the bias terms $\wh{s}_i^u$ and $\wh{s}_j^v$
create a dependence on the output channel $P_{\rm out}(\cdot)$ 
with the indices $i$ and $j$.  This dependence is not considered
in the prior works.
We thus propose the following modification of the low-rank AMP method in
\cite{lesieur2015mmse,Lesieur_2017}.
The low-rank AMP method \cite{lesieur2015mmse,Lesieur_2017}
takes a quadratic approximation of
the log likelihood of the output channel.
We apply a similar approach here and 
first compute the so-called Fisher score
functions:
\begin{align}
    \MoveEqLeft 
    Y_{ij} :=  \left.\frac{\partial}{\partial w} 
     \log P_{\rm out}(Z_{ij}| w+s_i^u+s_j^v)\right|_{w=0} = \frac{1}{r_i^ur_j^v}\left( Z_{ij} - r_i^ur_j^v \right). \label{eq:score_ij}
\end{align}
Also, let $\Delta_{ij}$ denote the so-called
inverse Fisher information:
\begin{align}
    \MoveEqLeft 
    \frac{1}{\Delta_{ij}} := 
    \Exp\left[ \left(\left.\frac{\partial }{\partial w} \log P_{\rm out}(Z_{ij}|w+s_i^u+s_j^v)\right|_{w=0} \right)^2  \right]
     =r_i^u r_j^v
     \label{eq:del_ij}
\end{align}
Next, let $M_{ij} := (\bs{u}_i^\intercal \bs{v}_j)/\sqrt{m}$.  For large $m$, $M_{ij}$ is small, so we can take a Taylor's
approximation,
\begin{align}
     \MoveEqLeft \log P_{\rm out}(Z_{ij} | M_{ij} + s_i^u + s_j^v) \approx Y_{ij}M_{ij} - \frac{1}{2\Delta_{ij} }M_{ij}^2 + \mathrm{const}.
    \label{eq:logquad1}
\end{align}
To write this as a quadratic, define
the scaled variables:
\begin{equation} \label{eq:abyscaled}
    A := R_u^{1/2}U, \quad 
    B := R_v^{1/2}V, \quad 
    \tl{Y} := R_u^{1/2}Y R_v^{1/2}, 
\end{equation}
where $R_u$ and $R_v$ are diagonal matrices with diagonal elements $r_i^u$'s and $r_j^v$'s, respectively.
Then, using \eqref{eq:score_ij},
\eqref{eq:del_ij}, \eqref{eq:logquad1}
and some simple algebra shows
that the log likelihood can be written in 
a quadratic form:
\begin{align}
     \MoveEqLeft -\log P_{\rm out}(Z_{ij} | M_{ij} + s_i^u + s_j^v) \approx \frac{1}{2}\left|\tl{Y}_{ij} - \frac{1}{\sqrt{m}}[AB^\intercal]_{ij} \right|^2 + \mathrm{const}.
    \label{eq:logquad2}
\end{align}
Hence, we can approximate the loss function
\eqref{eq:loss_poisson} as:
\begin{align} 
L_0(U,V) &\approx L(A,B) + \mathrm{const},
\end{align}
where
\begin{align}
    \MoveEqLeft L(A,B) :=\frac{1}{2}\left\|  \tl{Y}- \frac{1}{\sqrt{m}}AB^\intercal \right\|^2_F +\phi_u(R_u^{-1/2}A) + \phi_v(R_v^{-1/2}B), 
\label{eq:LAB}
\end{align}
and then find the minima:
\begin{equation} \label{eq:ABmin}
    \wh{A},\wh{B} = \argmin_{A,B} L(A,B).
\end{equation}
We call $L(A,B)$ the \emph{quadratic
approximate loss function}.

\begin{algorithm}[t]
\caption{Biased Low Rank AMP}\label{alg:low-ramp}
\begin{algorithmic}[1]
\REQUIRE Number of iterations $K_{it}$; 
denoisers $G_a(\cdot)$, $G_b(\cdot)$; initial matrix $\wh{B}_0 \in \mathbb{R}^{n\times d}$; observation matrix $Z$
\STATE Estimate $\{\wh{r}^u_i,\wh{r}^v_j\}$ using \eqref{eq:rest}  
\STATE Compute $\tl{Y}$ using bias estimates and \eqref{eq:score_ij},\eqref{eq:abyscaled}
\STATE Initialize $k = 0$, $\Gamma_k^a=0$
\WHILE{$k<K_{it}$}

\STATE $F_k^a = \frac{1}{m}\wh{B}_k^\intercal\wh{B}_k - \Gamma^a_k$
\STATE $P_k^a = \frac{1}{\sqrt{m}} \tl{Y}\wh{B}_k - \wh{A}_{k-1} \Gamma^a_k$
\STATE $[\wh{A}_k]_{i\ast}  =  G_a([P^a_k]_{i\ast},\wh{r}^u_i, F_k^a) \quad \forall i\in[m]$  \label{step:ahat} 
\STATE  $\Gamma^b_k = \frac{1}{m}\sum_{i=1}^m \partial G_a([P^a_k]_{i\ast},\wh{r}^u_i,F^a_k) /\partial [P^a_k]_{i\ast}^{\intercal}$ 
\STATE $F^b_k = \frac{1}{m}\wh{A}_k^\intercal \wh{A}_k - \Gamma^b_k$
\STATE $P_k^b = \frac{1}{\sqrt{m}} \tl{Y}^\intercal \wh{A}_k - \wh{B}_k \Gamma^b_k$
\STATE $[\wh{B}_{k+1}]_{j\ast} = G_b([P_k^b]_{j\ast},\wh{r}^v_j, F_k^b) \quad \forall j\in[n] $ \label{step:bhat}
\STATE  $\Gamma^a_{k+1} = \frac{1}{n}\sum_{j=1}^n \partial G_b([P^b_k]_{j\ast},\wh{r}^v_j,F_k^b) /\partial [P^b_k]_{j\ast}^{\intercal}$ 

\STATE $k \gets k+1$
\ENDWHILE
\STATE return $\wh{A}_k$ and $\wh{B}_{k+1}$
\end{algorithmic}
\end{algorithm}

To solve the  minimization
\eqref{eq:ABmin},
we consider a generalization
of the rank one method of \cite{fletcher2018iterative}
and \cite{lesieur2015mmse} 
shown in Algorithm~\ref{alg:low-ramp},
which we call \emph{Biased Low-Rank AMP}.
Here, the function $G_a(\cdot)$ is the denoiser
\begin{align} 
    G_a(P^a,R_u, F^a) &:= 
    \argmin_A  -\tr [ (P^a)^\intercal A ] + \frac{1}{2} \tr[ F^a A^\intercal A ]  
     + \phi_u(R_u^{-1/2}A)  \label{eq:Gadef}
\end{align}
which in the row-wise form simplifies to:
\begin{equation} \label{eq:Gadef-row}
    G_a(p_i,r_i^u, F^a) := \argmin_a \frac{1}{2} a^\intercal F^a a - p_i^\intercal a + g_u(\frac{1}{\sqrt{r_i^u}} a)
\end{equation}
The denoiser $G_b(\cdot)$ is defined similarly.
The updates for the $\Gamma^a_k$ and $\Gamma^b_k$ are:
\begin{subequations}\label{eq:Gamma_abdef}
    \begin{align}
        \Gamma^a_k &= \frac{1}{n}\sum_{j=1}^n \frac{\partial G_b([P^b_k]_{j\ast},r^v_j,F^b_k)}{\partial [P^b_k]_{j\ast}^{\intercal}}  \\
        \Gamma^b_k &= \frac{1}{m}\sum_{i=1}^m \frac{\partial G_a([P^a_k]_{i\ast},r^u_i,F^a_k)}{\partial [P^a_k]_{i\ast}^{\intercal}}.
    \end{align}
\end{subequations}
Algorithm~\ref{alg:low-ramp} is identical to the low-rank AMP
algorithm of \cite{lesieur2015mmse,Lesieur_2017} but with two key differences:
First, and most importantly, the denoisers in steps \ref{step:ahat} and \ref{step:bhat}
in Algorithm~\ref{alg:low-ramp} have bias terms $\wh{r}_j^u$ and $\wh{r}^v_j$.
In the low-rank AMP algorithm \cite{lesieur2015mmse,Lesieur_2017}, the denoisers are the same for all rows.
In this sense, one key contribution of this work is to show that the embedding estimation
with variability in the term frequencies can be accounted for by a 
variable denoiser.  We will also show below that the state evolution analysis 
of the algorithm can be extended.

A second, and more minor difference, is that the  low-rank AMP algorithm of 
 \cite{lesieur2015mmse,Lesieur_2017} considers only MMSE denoisers.  Here, our analysis
 will apply to arbitrary Lipschitz denoisers.  In particular, the simulations below
 consider denoisers with a minimization \eqref{eq:Gadef} similar to the so-called
 MAP estimation in the AMP literature.

\subsection{Fixed Points}

As a first convergence result, the following Lemma shows that if the algorithm
converges, its fixed point is, at least, a local minimum of the objective. 
\begin{lemma}
Any fixed point of Algorithm \ref{alg:low-ramp} is a local minimum of \eqref{eq:LAB}.
\label{lem:fixed-points} 
\end{lemma}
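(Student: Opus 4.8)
The plan is to show that a fixed point $(\wh A,\wh B)$ of Algorithm~\ref{alg:low-ramp} satisfies both the first- and second-order conditions for a local minimum of $L$ in \eqref{eq:LAB}. I first drop the iteration index and record the fixed-point equations: $F^a = \tfrac1m\wh B^\intercal\wh B - \Gamma^a$, $P^a = \tfrac1{\sqrt m}\tl Y\wh B - \wh A\Gamma^a$, and $[\wh A]_{i\ast} = G_a([P^a]_{i\ast},\wh r_i^u,F^a)$, together with the symmetric $b$-side relations and the definitions of $\Gamma^a,\Gamma^b$ in \eqref{eq:Gamma_abdef}. Throughout I assume the regularizers $g_u,g_v$ are convex, so that the denoiser subproblems \eqref{eq:Gadef-row} are convex minimizations with well-defined optimality conditions.

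For stationarity, I differentiate \eqref{eq:LAB}. Writing $a_i=[\wh A]_{i\ast}^\intercal$ and $p_i=[P^a]_{i\ast}^\intercal$, the row-wise denoiser \eqref{eq:Gadef-row} gives the optimality condition $F^a a_i - p_i + \tfrac1{\sqrt{r_i^u}}\nabla g_u\!\big(\tfrac1{\sqrt{r_i^u}}a_i\big)=0$. Independently, the $i$-th row of $\nabla_A L$ equals $-\tfrac1{\sqrt m}[\tl Y\wh B]_{i\ast} + \tfrac1m[\wh A\wh B^\intercal\wh B]_{i\ast} + \tfrac1{\sqrt{r_i^u}}\nabla g_u(\cdots)^\intercal$. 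Substituting $\tfrac1{\sqrt m}\tl Y\wh B = P^a + \wh A\Gamma^a$ and $\tfrac1m\wh B^\intercal\wh B = F^a + \Gamma^a$ from the fixed-point relations, the two Onsager contributions $\wh A\Gamma^a$ cancel exactly, and the remainder is precisely the denoiser optimality residual. Hence $\nabla_A L = 0$, and by the symmetric argument $\nabla_B L = 0$, so $(\wh A,\wh B)$ is a stationary point. This cancellation of the Onsager terms is the clean, essential step.

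For the second-order condition I compute the Hessian quadratic form at $(\wh A,\wh B)$ along a perturbation $(\delta A,\delta B)$. With $E := \tl Y - \tfrac1{\sqrt m}\wh A\wh B^\intercal$ the residual and $H_i^u,H_j^v$ the Hessians of $g_u,g_v$ at the relevant points, this form is
\begin{align*}
&\tfrac1{2m}\big\|\delta A\wh B^\intercal + \wh A\delta B^\intercal\big\|_F^2 - \tfrac1{\sqrt m}\langle E,\delta A\delta B^\intercal\rangle \\
&\qquad + \tfrac12\sum_i\tfrac1{r_i^u}\delta a_i^\intercal H_i^u\delta a_i + \tfrac12\sum_j\tfrac1{r_j^v}\delta b_j^\intercal H_j^v\delta b_j .
\end{align*}
The first term and the two regularizer sums are manifestly nonnegative under convexity of $g_u,g_v$. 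Moreover, since each $G_a,G_b$ is the $\argmin$ of a convex subproblem, one has $F^a+\tfrac1{r_i^u}H_i^u\succeq0$ and $F^b+\tfrac1{r_j^v}H_j^v\succeq0$, and because $\partial G_b/\partial p = (F^b+\tfrac1{r_j^v}H_j^v)^{-1}\succeq0$ it follows that $\Gamma^a,\Gamma^b\succeq0$; this is what makes the diagonal blocks of the Hessian positive semidefinite.

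The main obstacle is the indefinite cross term $-\tfrac1{\sqrt m}\langle E,\delta A\delta B^\intercal\rangle$, which is not controlled by the diagonal blocks in general. To handle it I would exploit the fixed-point identity derived above, namely that $\tfrac1{\sqrt m}E\wh B = P^a - \wh A F^a$ has $i$-th row $\tfrac1{\sqrt{r_i^u}}\nabla g_u(\cdots)^\intercal$ (with the transposed identity for $\wh A^\intercal E$), so that the residual is pinned down by the regularizer gradients, and then attempt a Schur-complement inequality tying the off-diagonal block to the positive-semidefinite diagonal blocks. I expect this positive-semidefiniteness of the full Hessian, i.e.\ dominating the cross term, to be the genuinely delicate part, possibly requiring an additional hypothesis (for instance strong convexity of the regularizers, or restriction to stable fixed points); the first-order stationarity, by contrast, follows directly and cleanly from the Onsager cancellation.
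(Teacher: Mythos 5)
Your first-order argument is exactly the paper's proof: the paper likewise substitutes the fixed-point relations for $P^a$ and $F^a$ into the optimality condition of the denoiser \eqref{eq:Gadef}, cancels the Onsager contribution $\wh{A}\Gamma^a$, and concludes $\partial L(A,\wh{B})/\partial A = 0$ (and symmetrically in $B$). Where you diverge is in taking the phrase ``local minimum'' in the statement seriously. The paper does not: its proof stops at stationarity and in fact ends by asserting only that $(\wh{A},\wh{B})$ is a \emph{critical point} of \eqref{eq:LAB}, with no second-order analysis at all. So the difficulty you correctly isolate --- that the cross term $-\tfrac{1}{\sqrt{m}}\langle E, \delta A\,\delta B^\intercal\rangle$ in the Hessian quadratic form is indefinite and is not dominated by the diagonal blocks without additional hypotheses --- is not a gap in your argument relative to the paper; it is a gap in the lemma as stated, which the paper leaves unaddressed. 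Your assessment that positive semidefiniteness of the full Hessian would need extra assumptions (strong convexity of the regularizers, or a restriction to stable fixed points) is the right conclusion: in terms of what is actually established, both your proposal and the paper prove only that fixed points are critical points of \eqref{eq:LAB}.
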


\begin{proof}
Consider any fixed point of Algorithm \ref{alg:low-ramp}.  We drop the dependence on the iteration $k$.  
Then, the minimizer $\wh{A}$ satisfies: 
\begin{align}
    \MoveEqLeft \wh{A} =  G_a( P^a, R_u, F^a) \nonumber \\
    & \stackrel{(a)}{\Rightarrow} 
    R_u^{-1/2}\phi_u'(R_u^{-1/2}\wh{A}) - P^a + \wh{A}F^a = 0 \nonumber \\
    & \stackrel{(b)}{\Rightarrow}  R_u^{-1/2}\phi_u'(R_u^{-1/2}\wh{A}) - \frac{1}{\sqrt{m}} \tl{Y}\wh{B} + \wh{A}\Gamma^a 
    + \wh{A}F^a = 0 \nonumber \\
    & \stackrel{(c)}{\Rightarrow}  R_u^{-1/2}\phi_u'(R_u^{-1/2}\wh{A}) - \frac{1}{\sqrt{m}} \tl{Y}\wh{B}  
    + \frac{1}{m} \wh{A}\wh{B}^\intercal \wh{B} = 0 \nonumber \\
    & \stackrel{(d)}{\Rightarrow}  \frac{\partial L(A,\wh{B})}{\partial A} = 0,
\end{align}
where (a) follows from taking the derivative of the objective function of the denoiser in \eqref{eq:Gadef};
(b) follows from the update of $P_k$; 
(c)  follows from the update of $F^a_k$;
and (d) follows from taking derivative of the objective function \eqref{eq:LAB}.  Similarly, we can show that $\partial L(\wh{A},B)/\partial B = 0$.  Hence, $(\wh{A},\wh{B})$ is 
a critical point of \eqref{eq:LAB}.
\end{proof}

\section{Analysis in the Large System Limit}
\label{sec:results}
\subsection{Formal model}
The benefit of the AMP method is that
the performance of the algorithm
can be precisely analyzed in a certain
\emph{large system limit} (LSL) as is
commonly used
in studying AMP algorithms. 
In the LSL,
we consider a sequence of problems
indexed by $n$.  For each $n$,
we assume that $m = m(n)$ where
\begin{equation}\label{eq:beta}
    \lim_{n \rightarrow \infty} \frac{m(n)}{n} = \beta,
\end{equation}
for some $\beta > 0$. 
That is, the number of
values of the random variables $X_1$ and $X_2$ grow linearly.   Importantly, the embedding dimension
$d$ remains fixed.

Next, we assume that the bias terms
$r_i^u$ and $r_j^v$ as well as the 
true embedding vectors $\bs{u}_i$ and $\bs{v}_j$
have a certain limiting distribution.
Specifically, recall that the rows
of the matrices $A$ and $B$ in \eqref{eq:abyscaled} are the scaled
true embedding vectors:
\[
    [A]_{i*} = \sqrt{r_i^u} \bs{u}_i, 
    \quad
    [B]_{j*} = \sqrt{r_j^v} \bs{v}_j. 
\]
Similarly, the rows of $\wh{A}_0$ and $\wh{B}_0$
are the initial estimates of the rows
of $A$ and $B$.  We assume these quantities are
deterministic, 
but converge empirically with second-order moments (see~Definition \ref{def:pseudolipschitz}
for a precise definition of the concept)
to random variables
\begin{subequations}  \label{eq:suvlim}
\begin{align} 
    & \{r_i^u, [A]_{i\ast}, [\wh{A}_0]_{i\ast}\}_{i=1}^m \stackrel{PL(2)}{\longrightarrow} (R^u, \mc{A}, \wh{\mc{A}}_0), 
    \\
    &\{r_j^u, [B]_{j\ast}, [\wh{B}_0]_{j\ast}\}_{j=1}^n \stackrel{PL(2)}{\longrightarrow} (R^u, \mc{B}, \wh{\mc{B}}_0), 
\end{align}
\end{subequations}
where $R^u$ and $R^v$ are scalar
random variables and $\mc{A}$, $\mc{B}$,
$\mc{\wh{A}}_0$, and $\mc{\wh{B}}_0$
are random $d$-dimensional vectors.
One particular case where the convergence
\eqref{eq:suvlim} occurs is that 
values $\{r_i^u\}$, $\{r_j^v\}$ 
are drawn i.i.d.\ from
$R^u$ and $R^v$ respectively, and
$([A]_{i\ast}, [\wh{A}_0]_{i\ast})$,
$([B]_{i\ast}, [\wh{B}_0]_{i\ast})$,  are drawn i.i.d.\ from
$(\mc{A},\wh{\mc{A}}_0)$ and $(\mc{B},\wh{\mc{B}}_0)$ respectively.
Note that we have used the caligraphic
letters such as $\mc{A}$ and $\mc{B}$
to denote the random variables
describing the distribution of the rows
of the matrices $A$ and $B$.

As a second and critical simplifying assumption,
let 
\begin{equation} \label{eq:Wdef}
    W = \tl{Y} - \frac{1}{\sqrt{m}} AB^\intercal.
\end{equation}
For given $r_i^u$ and $r_j^v$, 
using the fact that $Z_{ij}$ are i.i.d.,
Poisson random variables with 
distribution \eqref{eq:z_ij},
it can be shown that $W_{ij}$ are i.i.d.,
with mean and second moments:
\begin{equation}
    \lim_{n \rightarrow \infty} \Exp(W_{ij}) = 0,
    \quad
    \lim_{n \rightarrow \infty} \Exp(W_{ij}^2) = 1,
\end{equation}
To simplify the analysis, we will approximate
$W_{ij}$ as Gaussian.  That is,
we will assume that 
$\tl{Y}$ is generated from
\begin{equation} \label{eq:YGauss}
    \tl{Y} = \frac{1}{\sqrt{m}} AB^\intercal
    + W, \quad W_{ij} \sim {\mc{N}}(0,1).
\end{equation}

Finally, 
we assume that the random variables and vectors in \eqref{eq:suvlim} are bounded
and $G_a(\cdot)$ and $G_b(\cdot)$
are Lipschitz continuous.

\subsection{Selecting the bias distribution} 

The above formal probabilistic model for the 
variables allows us to capture 
key attributes of the parameters by correctly selecting the random variables.
We first start by discussing how to select the distributions of $R^u$ and $R^v$.
The variables $R^u$ and $R^v$ model
the variability in the bias terms, which
in turn can model the variability in the marginal
distributions of the terms.  
As an example, 
consider the following:
It is well known that the distribution of word occurances in human language roughly obeys a power law, namely Zipf's law, where the $\ell$-th most frequent term has a frequency proportional to $\frac{1}{\ell^\alpha}$ for $\alpha\approx 1$ \cite{Piantadosi2014-qt}. 
Suppose we want to model the terms coming from a Zipf law. Specifically, suppose $X_1 \in \{1,\ldots,m\}$ represents the index
for one of $m$ terms and the term probabilities are given by Zipf Law:
\[
    P(X_1=i) = \frac{C_m}{i^{\alpha}}
\]
for some constant $C_m$.
From \eqref{eq:marginal} we know that $r_i^u = c_1 P(X_1=i)$ for some constant $c_1$. Without loss of generality assume that $c_1=1$. Then,
\[
    r_i^u = \frac{C_m}{i^{\alpha}}.
\]
Since $C_m$ is arbitrary, we can take $C_m = C_0 m^\alpha$ for some $C_0$, 
so
\[
    r_i^u = C_0 (i/m)^{-\alpha}.
\]
It can be easily verified that:
\begin{equation} \label{eq:Ruzipf}
    \{ r_i^u \} \plconv  R^u := \frac{C_0}{U^\alpha}, \quad U \sim \mathrm{Unif}[0,1],
\end{equation}
where $\mathrm{Unif}$ denotes the uniform distribution. Hence, by selecting $R^u$ as in \eqref{eq:Ruzipf}, we can capture a Zipf distribution.
Other distributions are also possible.

\subsection{Selecting the embedding vector distributions}
For the embedding vectors,
the distribution of $\mc{A}$ and $\mc{B}$
can capture structural properties
of the embeddings.  
These properties
can include features such as
norm constraints, or sparsity.
As an example, sparse interdependent representation of words is especially beneficial for large vocabularies due to training, storage, and inference concerns that arise in large language models \cite{liang2021anchor}. 

Finally, the model can also capture the number of samples:
Let $N = \sum_{ij} Z_{ij}$ denote the total number of
training samples, so $N/(nm)$ is the number of samples
per pair of unknowns $(i,j)$ in the probability of the event, $(X_1,X_2)=(i,j)$.  This number of samples scales as:
\begin{align}
   \lim_{n,m\rightarrow \infty} \frac{N}{nm} &=
   \lim_{n,m\rightarrow \infty} 
   \frac{1}{nm}\sum_{ij}Z_{ij} \nonumber \\
   &\stackrel{(a)}{=} 
   \lim_{n\rightarrow \infty} 
        \frac{\lambda_0}{nm} \sum_{ij} \exp\left( s_i^u + s_j^v \right)
        \nonumber \\
     &\stackrel{(b)}{=} \lim_{n,m\rightarrow \infty} 
        \lambda_0 \left(\sum_{i} \frac{r_i^u}{m}\right) 
        \left(\sum_{j} \frac{r_j^v}{n}\right) 
        \nonumber \\  
    &\stackrel{(c)}{=}\lambda_0 \Exp(R^u)\Exp(R^v),
\end{align}
where, in step (a), we have used \eqref{eq:z_ij}
along with the fact that the $1/\sqrt{m}$ can be ignored in the limit; 
step (b) follows from the definitions of $r_i^u$ and $r_j^v$
in \eqref{eq:ruv_def},
and step (c) follows from the assumption of empirical
convergence \eqref{eq:suvlim}.  
The assumption \eqref{eq:bias_norm} requires that $\lambda_0 = 1$
and $\Exp(R^u) = 1$.  In this case, $\Exp(R^v)$
controls the total number of samples per unknown.
By adjusting this scaling we can thus analyze the sample
complexity of the estimation.

\subsection{Main results}
Our main result shows that,
under the above assumptions,
the joint distribution of true embedding vectors and
their estimates can be exactly predicted
by a state evolution (SE). The SE,
shown in Algorithm~\ref{alg:se} is a modification of the result in
\cite{fletcher2017expectation}.
The SE
generates a sequence of deterministic
quantities such as $\bar{M}_k^a$, $\bar{Q}_k^a$, $\bar{F}_k^a$,
as well as random vectors such as $\mc{P}_k^a$ and $\wh{\mc{A}}_k$. 
\begin{theorem}\label{theorem1}
Under the above assumptions, consider the outputs of Algorithm~\ref{alg:low-ramp}
and the state evolution updates in Algorithm~\ref{alg:se}.
Then, for every $k$
\begin{subequations} \label{eq:param-conv}
    \begin{align}
    &\lim_{n \rightarrow \infty} (M_k^a,F_k^a,Q_k^a) = 
    (\bar{M}_k^a,\bar{F}_k^a,\bar{Q}_k^a), \\ 
&\lim_{n \rightarrow \infty} (M_k^b,F_k^b,Q_k^b) = 
    (\bar{M}_k^b,\bar{F}_k^b,\bar{Q}_k^b), 
\end{align}
\end{subequations}
where the convergence is almost surely and the quantities on the left
are from  Algorithm~\ref{alg:low-ramp}
and the quantities from the right are from SE Algorithm~\ref{alg:se}.
In addition, 
the joint distributions of the embedding vectors and their estimates converge as 
\begin{subequations} \label{eq:empiricalA}
    \begin{align}
    &([A]_{i\ast}, [\wh{A}_k]_{i\ast}, r_i^u, \wh{r}_i^u)
    \plconv (\mc{A}, \wh{\mc{A}}_k, R^u, R^u) \\
    &([B]_{j\ast}, [\wh{B}_k]_{j\ast}, r_j^v, \wh{r}_j^v)
    \plconv (\mc{B}, \wh{\mc{B}}_k, R^v, R^v)
\end{align}
\end{subequations}
\end{theorem}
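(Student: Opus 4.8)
The plan is to establish both \eqref{eq:param-conv} and \eqref{eq:empiricalA} simultaneously by induction on the iteration index $k$, using the conditioning technique that underlies rigorous AMP state-evolution proofs \cite{bayati2011dynamics,fletcher2017expectation}. The key structural observation is that the only novelty of Algorithm~\ref{alg:low-ramp} relative to the standard low-rank AMP of \cite{lesieur2015mmse,Lesieur_2017} is the index-dependence of the denoisers through the bias estimates $\wh{r}_i^u$ and $\wh{r}_j^v$. I would absorb this dependence by treating the scaled bias $r_i^u$, together with its estimate $\wh{r}_i^u$, as deterministic \emph{side information} attached to row $i$, so that $G_a(\cdot,\wh{r}_i^u,\cdot)$ is viewed as a single parametrized, separable, Lipschitz denoiser acting on the augmented row. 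Because \eqref{eq:suvlim} assumes this side information converges $PL(2)$ jointly with $[A]_{i*}$ and $[\wh{A}_0]_{i*}$, the biased recursion becomes an instance of the general low-rank AMP framework whose state evolution is established in \cite{fletcher2017expectation}, and the bulk of the argument is a verification that the hypotheses of that framework hold here.

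For the base case $k=0$, the claims reduce to the assumed empirical convergence \eqref{eq:suvlim} together with consistency of the frequency-count estimators \eqref{eq:rest}, which gives $\wh{r}_i^u \to R^u$ jointly with the other row quantities. For the inductive step, I would insert the Gaussian measurement model \eqref{eq:YGauss} into the $P_k^a$ update of Algorithm~\ref{alg:low-ramp} to obtain
\begin{equation}
P_k^a = A\Bigl(\tfrac{1}{m}B^\intercal \wh{B}_k\Bigr) + \tfrac{1}{\sqrt{m}}W\wh{B}_k - \wh{A}_{k-1}\Gamma^a_k. \nonumber
\end{equation}
The first term is the signal contribution, which by the inductive hypothesis concentrates row-wise to a deterministic linear image $[A]_{i*}\overline{M}_k^a$; the second term is the effective noise; and the third is the Onsager correction. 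The heart of the proof is the conditioning lemma: conditioned on the $\sigma$-algebra generated by all quantities available before $\wh{B}_k$ is denoised, $W$ may be replaced in distribution, up to vanishing terms, by a fresh standard Gaussian plus a correction supported on the span of the earlier iterates, and the Onsager term $\wh{A}_{k-1}\Gamma^a_k$ is precisely what cancels that correction. This leaves $[P_k^a]_{i*}$ asymptotically equal to $[A]_{i*}\overline{M}_k^a$ plus an independent Gaussian of covariance $\overline{Q}_k^a$. I would then use that $PL(2)$ convergence is preserved under the row-wise application of the Lipschitz denoiser $G_a$ to deduce the claimed convergence of $([A]_{i*},[\wh{A}_k]_{i*},r_i^u,\wh{r}_i^u)$, and I would read off the updated SE scalars by integrating against the limiting law, using Stein's lemma to match the empirical divergence averages $\Gamma^a_k,\Gamma^b_k$ to the corresponding terms of Algorithm~\ref{alg:se}. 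The update for $B$ is handled identically by the $a \leftrightarrow b$, $U \leftrightarrow V$ symmetry.

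The step I expect to be the main obstacle is checking that the conditioning and concentration machinery survives the passage from identical to index-dependent denoisers. The standard argument exploits exchangeability of the rows to apply concentration of Lipschitz functions of Gaussian vectors uniformly; here each row carries its own parameter $\wh{r}_i^u$, and, worse, $\wh{r}_i^u$ is a weakly data-dependent quantity through \eqref{eq:rest} rather than an exactly fixed constant. I would control this by first showing that the estimation error is $PL(2)$-negligible, so that the denoiser may be analyzed as though driven by the true $r_i^u$, and then invoking the form of the state-evolution bounds that admits a deterministic, empirically-convergent family of nonlinearities indexed by side information, rather than a single shared denoiser. A secondary point worth flagging is that the theorem is stated for the idealized Gaussian surrogate \eqref{eq:YGauss}; since the genuine Poisson fluctuations $W_{ij}$ are matched only in their first two moments, the exactness of the SE is asserted under this Gaussian approximation, and no further work on the channel is required within the scope of this statement.
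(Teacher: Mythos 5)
Your proposal follows essentially the same route as the paper: prove consistency of the frequency-count bias estimates (the paper's Lemma~\ref{lem:biasconv}), absorb the row-dependent biases $r_i^u,\wh{r}_i^u$ as side information converging jointly $PL(2)$ with the embedding rows, and then invoke a vector-valued Bayati--Montanari state evolution for recursions with row-indexed Lipschitz denoisers and data-dependent parameters (the paper's Theorem~\ref{thm:bm}, which it obtains by explicit reduction --- setting $T_k = P_k^a - AM_k^b$ and identifying the Onsager term --- and by citation to prior work rather than by re-running the conditioning argument you sketch). The only blemish is a pair of index slips: by the SE updates, the signal and effective noise in $[P_k^a]_{i*}$ should be $[A]_{i*}\bar{M}_k^b$ and $\mathcal{N}(0,\bar{Q}_k^b)$, not the $a$-superscripted quantities.
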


To understand the result first consider the convergence of the 
bias terms $r_i^u$ and their estimates $\wh{r}_i^u$.
The results show that the estimates are asymptotically consistent.  For example,
the empirical convergence $PL(2)$ implies: 
\[
    \lim_{n\rightarrow \infty} \frac{1}{n} \sum_{i=1}^n |r_i^u - \wh{r}_i^u|^2 = \Exp|R^u - R^u|^2 = 0.
\]
The convergence result also enables us to compute error metrics on 
the estimated embedding vector.  For example, using $PL(2)$ convergence, we can compute the average
MSE on each row as:
\begin{align}
    \MoveEqLeft \lim_{n \rightarrow \infty} \frac{1}{n} \sum_{i=1}^n \|\left[ A\right]_{i*} - 
    [\wh{A}_k]_{i*}\|^2 =  \Exp \|\mc{A} - \wh{\mc{A}}_k\|^2_2,
\end{align}
where the right-hand side can be evaluated using the distributions of the random variables from the SE.
We can also evaluate quantities such as the overlap:
\begin{align}
    \MoveEqLeft \lim_{n \rightarrow \infty} \frac{1}{n} \sum_{i=1}^n |[A]_{i*}^\intercal 
    [\wh{A}_k]_{i*}| =  \Exp |\mc{A}^\intercal \wh{\mc{A}}_k|,
\end{align}
or any other similar metric.  
Importantly, we can also see how this MSE varies with the relative frequency.
For example, the quantity
\[
    \Exp \left( \|\mc{A} - \wh{\mc{A}}_k\|^2_2 \mid  R^u = r \right),
\]
describes the MSE as a function of the term frequency $r$.  Thus, we can see, for example,
how well the estimator performs on terms that occur infrequently.



\begin{algorithm}[t]
\caption{State Evolution}\label{alg:se}
\begin{algorithmic}[1]
\REQUIRE Number of iterations $K_{it}$;
denoisers $G_a(\cdot)$, $G_b(\cdot)$;
initial random row vector $\wh{\mc{B}}_0 \in \Real^d$.
\STATE Initialize $k = 0$, $\Gamma_k^a=0$
\WHILE {$k<K_{it}$}
\STATE $\bar{M}_k^b = \Exp( \mc{B}^\intercal \wh{\mc{B}}_k)$,
$\bar{Q}_k^b = \Exp( \wh{\mc{B}}_k^\intercal\wh{\mc{B}}_k)$
\STATE $\bar{F}_k^a = \bar{Q}_k^b - \Gamma_k^a$
\STATE $\mc{P}_k^a = \mc{A}\bar{M}_k^b + \mathcal{N}(0,\bar{Q}_k^b)$
\STATE $\wh{\mc{A}}_k = G_a(\mc{P}_k^a, R^u, \bar{F}_k^a)$ 
\STATE $\Gamma_k^b = \Exp\left[ \partial G_a(\mc{P}_k^a, R^u, \bar{F}_k^a) / \partial \mc{P}_k^a \right]$
\STATE $\bar{M}_k^a = \Exp( \mc{A}^\intercal \wh{\mc{A}}_k)$,
$\bar{Q}_k^a = \Exp( \wh{\mc{A}}_k^\intercal\wh{\mc{A}}_k)$
\STATE $\bar{F}_k^b = \bar{Q}_k^a - \Gamma_k^b$
\STATE $\mc{P}_k^b = \mc{B}\bar{M}_k^a + \mathcal{N}(0,\bar{Q}_k^a)$
\STATE $\wh{\mc{B}}_{k+1} = G_b(\mc{P}_k^b, R^v, \bar{F}_k^b)$ 
\STATE $\Gamma_{k+1}^a = \Exp\left[ \partial G_a(\mc{P}_k^b, R^v, \bar{F}_k^b) / \partial \mc{P}_k^b \right]$

\STATE $k \gets k+1$
\ENDWHILE
\STATE return $\wh{\mc{A}}_k$ and $\wh{\mc{B}}_{k+1}$
\end{algorithmic}
\end{algorithm}

\section{Proofs} \label{sec:lsl-proof}
\subsection{Preliminaries}
We begin with the following technical definitions on convergence \cite{emami2020}:
\begin{definition}\label{def:pseudolipschitz}
    (Pseudo-Lipschitz continuity). For a given $p\geq 1$, a function $\phi: \mathbb{R}^\ell \rightarrow \mathbb{R}^r$ is called Pseudo-Lipschitz continuous if for some constant $C>0$ we have:
    \[
    \|\phi(x_1) - \phi(x_2)\| \leq C\|x_1-x_2\|(1+\|x_1\|^{p-1}+\|x_2\|^{p-1})
    \]
\end{definition}
\begin{definition}\label{def:pl(p)}
    (Empirical convergence of a sequence) Consider a sequence $\{x_i\}_{i=1}^n$ with $x_i \in \mathbb{R}^\ell$. For a finite $p\geq 1$, we say that the sequence $\{x_i\}_{i=1}^n$ converges empirically with $p$-th order moments if there exists a random variable $X\in \mathbb{R}^\ell$ such that:
    \begin{enumerate}
        \item $\mathbb{E}(\|X\|_p^p)< \infty$
        \item For any $\phi: \mathbb{R}^\ell \rightarrow  \mathbb{R}$ that is pseudo-Lipschitz continuous of order $p$, \[
        \lim_{n\rightarrow \infty} \frac{1}{n}\sum_{i=1}^n \phi(x_i) = \mathbb{E}[\phi(X)].
        \]
    \end{enumerate}
\end{definition}
When $\{x_i\}_{i=1}^n$ converges empirically to $X$ with 
$p$-th order moments, we will write:
    \[
    \lim_{n\rightarrow \infty}\{x_i\}_{i=1}^n \pl X
    \]
    We note that $PL(p)$ convergence is also equivalent to convergence in Wasserstein-$p$ metric \cite{Villani2008-pw}. 
For the theorems below, we will focus on the case
when $p=2$.
Also, when the context is clear, we may simply write
$x_i\plconv X$ instead of 
$\{x_i\}_{i=1}^n \plconv X$.
We also need the following formulae for a Poisson random variable.

\begin{lemma}  \label{lem:poisson}  Let $X$ be a Poisson random variable with $\Exp(X) = \lambda$.
Then, the second and fourth central moments are (\cite{kendall1946}):
\begin{equation}
    \Exp(X-\lambda)^2 = \lambda, \quad
    \Exp(X-\lambda)^4 = \lambda + 3\lambda^2.
\end{equation}    
\end{lemma}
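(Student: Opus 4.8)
The plan is to reduce everything to the \emph{factorial moments} of the Poisson law, which take an especially clean form, and then assemble the two desired central moments by binomial expansion. Concretely, for the PMF $\Prob(X=k)=e^{-\lambda}\lambda^k/k!$, I would first establish the identity
\begin{equation}
    \Exp\bigl[ X(X-1)\cdots(X-r+1) \bigr] = \lambda^r, \qquad r = 1,2,3,4.
\end{equation}
This is the only substantive computation: writing the expectation as a sum over $k\geq r$, the falling factorial $k(k-1)\cdots(k-r+1)$ cancels the first $r$ factors of $k!$, leaving $\sum_{k\geq r} e^{-\lambda}\lambda^k/(k-r)!$; after the index shift $j=k-r$ this factors as $\lambda^r \sum_{j\geq 0} e^{-\lambda}\lambda^j/j! = \lambda^r$, since the remaining sum is the total Poisson mass and hence equals one.

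From these four factorial moments I would back out the raw moments $\Exp[X^p]$ for $p=1,\dots,4$ by expressing each monomial $X^p$ as a linear combination of falling factorials (equivalently, via the Stirling numbers of the second kind). This yields $\Exp[X]=\lambda$, $\Exp[X^2]=\lambda^2+\lambda$, $\Exp[X^3]=\lambda^3+3\lambda^2+\lambda$, and $\Exp[X^4]=\lambda^4+6\lambda^3+7\lambda^2+\lambda$.

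The two claimed central moments then follow by substituting these into the binomial expansions
\begin{equation}
    \Exp(X-\lambda)^2 = \Exp[X^2]-2\lambda\Exp[X]+\lambda^2,
\end{equation}
\begin{equation}
    \Exp(X-\lambda)^4 = \Exp[X^4]-4\lambda\Exp[X^3]+6\lambda^2\Exp[X^2]-4\lambda^3\Exp[X]+\lambda^4.
\end{equation}
The first collapses immediately to $\lambda$. The only care needed is in the fourth: the coefficients of $\lambda^4$ and $\lambda^3$ must vanish (they are $1-4+6-4+1=0$ and $6-12+6=0$ respectively), leaving $3\lambda^2+\lambda$. The main ``obstacle'' here is therefore purely bookkeeping, namely verifying these cancellations, rather than anything conceptual. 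As an even shorter alternative, I note that the cumulant generating function of the Poisson is $\log\Exp[e^{tX}]=\lambda(e^t-1)$, so every cumulant equals $\lambda$; the result is then immediate from the standard identities $\mu_2=\kappa_2$ and $\mu_4=\kappa_4+3\kappa_2^2$.
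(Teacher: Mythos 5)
Your proposal is correct: the factorial moments $\Exp[X(X-1)\cdots(X-r+1)]=\lambda^r$ are derived properly, the raw moments $\Exp[X^2]=\lambda^2+\lambda$, $\Exp[X^3]=\lambda^3+3\lambda^2+\lambda$, $\Exp[X^4]=\lambda^4+6\lambda^3+7\lambda^2+\lambda$ are right, and the binomial-expansion bookkeeping (including the cancellation of the $\lambda^4$ and $\lambda^3$ coefficients) lands exactly on $\Exp(X-\lambda)^2=\lambda$ and $\Exp(X-\lambda)^4=\lambda+3\lambda^2$. The one thing to note is that the paper does not prove this lemma at all: it is stated as a known fact with a citation to Kendall, Stuart, and Ord, so there is no in-paper argument to compare against. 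Your derivation supplies what the paper outsources to the literature, and either of your two routes would serve; the cumulant argument you sketch at the end ($\log\Exp[e^{tX}]=\lambda(e^t-1)$, so every cumulant is $\lambda$, and $\mu_2=\kappa_2$, $\mu_4=\kappa_4+3\kappa_2^2$) is the shortest self-contained proof and is essentially how such tables are produced in the cited reference, while the factorial-moment computation is more elementary in that it avoids invoking the moment--cumulant relations.
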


We next need a simple bound on the square
of sums of random variables:
\begin{lemma} \label{lem:sumsq}  
Let $x_{ik}$, $i=1,\ldots,n$,$k=1,\ldots,K$, be a set of scalars. Then,
\[
    \sum_{i=1}^n \left| \sum_{k=1}^K x_{ik} \right|^2 \leq K^2 \max_k 
     \sum_{i=1}^n |x_{ik} |^2.
\]
\end{lemma}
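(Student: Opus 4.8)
The plan is to reduce the claim to one application of the Cauchy--Schwarz inequality followed by a crude bound on the outer sum over $k$. First I would fix the index $i$ and view the inner sum $\sum_{k=1}^K x_{ik}$ as the pairing of the all-ones vector $\bs{1} = (1,\ldots,1) \in \Real^K$ with the vector $(x_{i1},\ldots,x_{iK})$. Cauchy--Schwarz then gives, for each fixed $i$,
\[
    \left| \sum_{k=1}^K x_{ik} \right|^2 \le \|\bs{1}\|^2 \sum_{k=1}^K |x_{ik}|^2 = K \sum_{k=1}^K |x_{ik}|^2,
\]
which is valid whether the $x_{ik}$ are real or complex since the modulus is used throughout. (Equivalently, this is the power-mean / convexity bound $(\sum_k a_k)^2 \le K \sum_k a_k^2$ applied to $a_k = |x_{ik}|$ via the triangle inequality.)

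Next I would sum this inequality over $i = 1,\ldots,n$ and interchange the (finite) order of summation:
\[
    \sum_{i=1}^n \left| \sum_{k=1}^K x_{ik} \right|^2 \le K \sum_{i=1}^n \sum_{k=1}^K |x_{ik}|^2 = K \sum_{k=1}^K \left( \sum_{i=1}^n |x_{ik}|^2 \right).
\]
The final step is the crude bound that replaces each of the $K$ inner sums over $i$ by its maximum, namely $\sum_{i=1}^n |x_{ik}|^2 \le \max_{k'} \sum_{i=1}^n |x_{ik'}|^2$ for every $k$, so that the sum over the $K$ terms is at most $K \max_k \sum_{i=1}^n |x_{ik}|^2$. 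Combining with the previous display yields the factor $K \cdot K = K^2$ claimed in the statement.

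There is essentially no obstacle here: the result is a two-line consequence of Cauchy--Schwarz plus a worst-case bound, and no structure of the $x_{ik}$ beyond finiteness is used. The only point worth flagging is that the stated constant $K^2$ is deliberately loose — the intermediate bound $K \sum_{k}\sum_i |x_{ik}|^2$ is already tight up to the final maximization, and the $\max_k$ form is presumably chosen because it is the shape most convenient for the downstream concentration arguments (e.g.\ controlling sums of the score-derived quantities $Y_{ij}$ and $W_{ij}$ in the large-system-limit proof). I would keep the proof in exactly this order so that the role of each factor of $K$ is transparent.
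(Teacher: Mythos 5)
Your proof is correct, but it routes through Cauchy--Schwarz differently from the paper. You fix $i$ and apply Cauchy--Schwarz along the index $k$ (pairing with the all-ones vector) to get $\left|\sum_k x_{ik}\right|^2 \le K\sum_k |x_{ik}|^2$, then sum over $i$, swap the sums, and finish by bounding each of the $K$ column sums by the maximum. The paper instead expands the square first, writing $\sum_i \left|\sum_k x_{ik}\right|^2 = \sum_{k}\sum_{\ell}\sum_i x_{ik}\bar{x}_{i\ell}$, and applies Cauchy--Schwarz along the index $i$ to each of the $K^2$ cross-terms, bounding $\left|\sum_i x_{ik}\bar{x}_{i\ell}\right| \le M$ with $M = \max_k \sum_i |x_{ik}|^2$. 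The two arguments are equally short and both land on the constant $K^2$, but they are genuinely different decompositions: yours passes through the intermediate bound $K\sum_k\sum_i|x_{ik}|^2$, which is sharper than $K^2 M$ before the final maximization (your closing remark about the looseness of the constant is exactly right), whereas the paper's version makes the origin of the factor $K^2$ completely transparent as a count of $(k,\ell)$ pairs, each contributing at most $M$. Either proof is adequate for the downstream use in the consistency argument for the bias estimates.
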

\begin{proof} Let $M = \max_k \sum_{i=1}^n |x_{ik}|^2$.
Then,  
\begin{align*}
    \MoveEqLeft  \sum_{i=1}^n |\sum_{k=1}^K x_{ik} |^2 
    = \sum_{k=1}^K\sum_{\ell=1}^K\sum_{i=1}^n x_{ik} \bar{x}_{i\ell} \leq \sum_{k=1}^K\sum_{\ell=1}^K \left| 
    \sum_{i=1}^n x_{ik}\bar{x}_{i\ell} \right|      \leq K^2 M,
\end{align*}
where $\bar{x}_{i\ell}$ denotes the conjugate of $x_{i\ell}$ and the last step follows from Cauchy-Schwartz.
\end{proof}
We will also use the following variant of the strong law of large numbers (SLLN).
Recall that a variable $Y$ is \emph{uniformly bounded by} a variable $X$ if
\begin{equation}
    P(|Y| \geq t) \leq P(|X| \geq t)
\end{equation}
for all $t \geq 0$.
\begin{lemma}[SLLN for triangular arrays, Theorem 2 of \cite{hu1989strong}]  \label{lem:slln} Let $X_{ni}$, $i=1,\ldots,n$, $n=1,2,\ldots$
be a triangular array of zero-mean, independent random variables that is uniformly bounded by a random variable $X$ with $\Exp(X^{2p}) < \infty$ for $1\leq p <2$.
Then, 
\begin{equation}
    S_n = \frac{1}{n^{1/p}} \sum_{i=1}^n X_{ni} \rightarrow 0
\end{equation}    
almost surely.
\end{lemma}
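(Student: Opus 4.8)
The plan is to prove Lemma~\ref{lem:slln} by the classical truncation method, reducing the almost-sure statement to a summable-probability estimate through the first Borel--Cantelli lemma. Fix $\epsilon > 0$ and set the truncation level $b_n := n^{1/p}$. I would split $X_{ni} = Y_{ni} + Z_{ni}$ with truncated part $Y_{ni} := X_{ni}\mathbf{1}\{|X_{ni}| \le b_n\}$ and tail part $Z_{ni} := X_{ni}\mathbf{1}\{|X_{ni}| > b_n\}$, and control the three contributions $n^{-1/p}\sum_i Z_{ni}$, $n^{-1/p}\sum_i \Exp Y_{ni}$, and $n^{-1/p}\sum_i (Y_{ni}-\Exp Y_{ni})$ separately. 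Throughout, the uniform domination $\Prob(|X_{ni}|>t)\le \Prob(|X|>t)$ lets me pass to moments of $X$, and the rowwise independence within each $n$ is what enables the concentration step.

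For the tail part, domination gives $\sum_{n}\sum_{i=1}^{n}\Prob(Z_{ni}\neq 0)\le \sum_n n\,\Prob(|X|>n^{1/p})=\sum_n n\,\Prob(|X|^{2p}>n^2)$, and writing this through the identity $\Exp|X|^{2p}=\int_0^\infty \Prob(|X|^{2p}>t)\,dt$ shows the double sum is finite because $\Exp|X|^{2p}<\infty$; the first Borel--Cantelli lemma then forces only finitely many $Z_{ni}$ to be nonzero almost surely, so $n^{-1/p}\sum_i Z_{ni}=0$ for all large $n$. For the centering term I use $\Exp X_{ni}=0$, so $|\Exp Y_{ni}|=|\Exp Z_{ni}|\le \Exp[|X|\,\mathbf{1}\{|X|>n^{1/p}\}]$, and bounding $|X|\le |X|^{2p}\,n^{-(2-1/p)}$ on $\{|X|>n^{1/p}\}$ yields $n^{-1/p}\sum_i |\Exp Y_{ni}|\le n^{-1}\Exp|X|^{2p}\to 0$, a deterministic vanishing term.

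The substantive step is the centered truncated part $\bar Y_{ni}:=Y_{ni}-\Exp Y_{ni}$. Here I apply Rosenthal's inequality: for any $s\ge 2$, by rowwise independence, $\Exp|\sum_i \bar Y_{ni}|^s\le C_s[(\sum_i \Exp \bar Y_{ni}^2)^{s/2}+\sum_i \Exp|\bar Y_{ni}|^s]$. The variance term is handled with $\Exp \bar Y_{ni}^2\le \Exp X^2<\infty$ (finite since $2p\ge 2$), contributing $C\,n^{s(1/2-1/p)}$ after the $n^{-s/p}$ normalization; this is summable provided $s>\tfrac{2p}{2-p}$, using $1/2-1/p<0$ for $p<2$. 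The high-order term is delicate because only $\Exp|X|^{2p}<\infty$ is assumed, so $\Exp|X|^s$ may be infinite and truncation is essential: a crude termwise bound $\Exp|Y_{ni}|^s\le b_n^{\,s-2p}\Exp|X|^{2p}$ gives only an unsummable $O(n^{-1})$ after summing. The fix is to sum over $n$ before bounding. Comparing the truncated $s$-th moment to that of $X$ and interchanging sum and expectation (Fubini), $\sum_n n^{1-s/p}\,\Exp[|X|^s\mathbf{1}\{|X|\le n^{1/p}\}]=\Exp[|X|^s\sum_{n\ge |X|^p} n^{1-s/p}]$, and since $s>2p$ forces $1-s/p<-1$, the inner sum is $O(|X|^{2p-s})$, so the whole quantity is $O(\Exp|X|^{2p})<\infty$. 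Choosing $s>\tfrac{2p}{2-p}$ (which also gives $s>2p$ since $p\ge 1$) thus makes $\sum_n \Exp|n^{-1/p}\sum_i \bar Y_{ni}|^s<\infty$, and Markov's inequality with the first Borel--Cantelli lemma delivers $n^{-1/p}\sum_i \bar Y_{ni}\to 0$ almost surely. Adding the three parts gives $S_n\to 0$ a.s.

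The main obstacle is exactly this last summability: the hypothesis controls only the $2p$-th moment, so the high-order Rosenthal term cannot be treated one $n$ at a time and must be summed via the Fubini/tail-exchange identity above, which is where $\Exp|X|^{2p}<\infty$ is fully exploited. The secondary technical care is the choice of the Rosenthal exponent $s>\tfrac{2p}{2-p}$, large enough to render the variance term summable while the truncation keeps the $s$-th absolute moments finite, together with the (routine) layer-cake comparison needed to dominate the non-monotone truncated moment $\Exp[|X_{ni}|^s\mathbf{1}\{|X_{ni}|\le b_n\}]$ by the corresponding quantity for $X$.
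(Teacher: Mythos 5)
The paper does not prove Lemma~\ref{lem:slln} at all: it is quoted as Theorem~2 of \cite{hu1989strong} and the authors defer entirely to that reference, so there is no in-paper argument to compare against. What you offer is a self-contained derivation, and its architecture is the standard modern route to such results: truncation at $b_n=n^{1/p}$, the three-way split into tail, centering, and centered-truncated parts, then Markov plus Rosenthal to obtain $\sum_n \Prob\bigl(n^{-1/p}\bigl|\sum_i \bar Y_{ni}\bigr|>\epsilon\bigr)<\infty$ and conclude by Borel--Cantelli. In fact your argument yields complete convergence, which is the natural mode here, since the rows of a triangular array need not be nested and no across-$n$ maximal inequality is available. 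Your exponent bookkeeping is correct: the Rosenthal variance term needs $s>\tfrac{2p}{2-p}$, the Fubini exchange in the high-order term needs $1-s/p<-1$, i.e.\ $s>2p$, and $s>\tfrac{2p}{2-p}$ implies $s>2p$ precisely because $p\ge 1$. The tail and centering estimates are also sound; note the centering step is safe because $x\mapsto x\,\mathbf{1}\{x>b\}$ is non-decreasing on $[0,\infty)$, so stochastic domination transfers directly there.

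The one step that is false as literally stated is the comparison you label routine. Since $x\mapsto |x|^s\mathbf{1}\{|x|\le b\}$ is \emph{not} monotone, uniform boundedness does not give $\Exp[|X_{ni}|^s\mathbf{1}\{|X_{ni}|\le b_n\}]\le \Exp[|X|^s\mathbf{1}\{|X|\le b_n\}]$, which is the form you use in your displayed Fubini computation. Counterexample: $X_{ni}=\pm b_n$ symmetric and $|X|\equiv 2b_n$ satisfy the domination hypothesis, yet the left side equals $b_n^s$ while the right side is $0$. The correct comparison follows from the layer-cake identity
\begin{equation*}
\Exp\bigl[|Y|^s\mathbf{1}\{|Y|\le b\}\bigr]\;\le\; \int_0^b s\,t^{s-1}\,\Prob(|Y|>t)\,dt,
\end{equation*}
which after applying domination to the integrand gives
\begin{equation*}
\Exp\bigl[|X_{ni}|^s\mathbf{1}\{|X_{ni}|\le b_n\}\bigr]\;\le\; \Exp\bigl[|X|^s\mathbf{1}\{|X|\le b_n\}\bigr]+b_n^s\,\Prob\bigl(|X|>b_n\bigr).
\end{equation*}
The extra boundary term is harmless: after multiplying by $n^{1-s/p}$ it contributes $\sum_n n\,\Prob(|X|>n^{1/p})$, which is finite by exactly the computation you already carried out for the tail part. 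With that one-line correction inserted, your proof is complete and correct.
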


\begin{lemma} \label{lem:poissonlim}
Suppose that $P_n \sim \mathrm{Poisson}(\lambda_n)$ are independent with  $\Exp(P_n)/n = \lambda_n/n \rightarrow \lambda$.  Then, $P_n/n \rightarrow \lambda$
almost surely.
\end{lemma}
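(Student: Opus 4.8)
The plan is to center the Poisson variables and reduce the claim to a Borel--Cantelli argument. I would write
\[
    \frac{P_n}{n} - \lambda = \frac{P_n - \lambda_n}{n} + \left( \frac{\lambda_n}{n} - \lambda \right),
\]
where the second term is deterministic and tends to $0$ by hypothesis. Hence it suffices to show that the centered random term $(P_n - \lambda_n)/n \to 0$ almost surely. Since each $P_n$ is a \emph{single} Poisson variable (not an average of many), this is really a statement about the concentration of $\mathrm{Poisson}(\lambda_n)$ around its mean after rescaling, rather than a classical law of large numbers.

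The first thing to try is the crude variance bound. By Lemma~\ref{lem:poisson} we have $\Exp(P_n - \lambda_n)^2 = \lambda_n$, so Chebyshev's inequality gives
\[
    \Prob\!\left( \frac{|P_n - \lambda_n|}{n} > \epsilon \right) \leq \frac{\lambda_n}{n^2 \epsilon^2}.
\]
Because $\lambda_n \sim \lambda n$, the right-hand side is only of order $1/n$, which is \emph{not} summable. A pure second-moment estimate therefore yields convergence in probability but not the almost-sure statement we want; upgrading to almost-sure convergence is the main obstacle.

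To overcome it I would pass to the fourth central moment, which is precisely why Lemma~\ref{lem:poisson} records it. By Markov's inequality applied to the fourth power,
\[
    \Prob\!\left( \frac{|P_n - \lambda_n|}{n} > \epsilon \right) \leq \frac{\Exp(P_n - \lambda_n)^4}{n^4 \epsilon^4} = \frac{\lambda_n + 3\lambda_n^2}{n^4 \epsilon^4}.
\]
Since $\lambda_n/n \to \lambda$, the numerator grows like $3\lambda^2 n^2$, so the bound is $O(1/n^2)$ and hence summable in $n$. This trade of the second moment for the fourth is the one genuine subtlety in the proof.

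Finally I would fix $\epsilon > 0$, note that for all large $n$ the deterministic term obeys $|\lambda_n/n - \lambda| < \epsilon/2$, so that $\{\,|P_n/n - \lambda| > \epsilon\,\} \subseteq \{\,|P_n - \lambda_n|/n > \epsilon/2\,\}$, apply the summable fourth-moment bound, and invoke the first Borel--Cantelli lemma to conclude $\Prob(|P_n/n - \lambda| > \epsilon \text{ i.o.}) = 0$. Intersecting over a sequence $\epsilon \downarrow 0$ then gives $P_n/n \to \lambda$ almost surely. I note that independence of the $P_n$ is not even required for this (summable) direction of Borel--Cantelli, and that the triangular-array SLLN of Lemma~\ref{lem:slln} does not apply directly here, since each index $n$ contributes a single variable rather than an average of $n$ independent terms.
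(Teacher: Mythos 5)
Your proof is correct, but it takes a genuinely different route from the paper. The paper exploits the infinite divisibility of the Poisson distribution: it writes $P_n = \sum_{i=1}^n Y_{ni}$ with $Y_{ni} \sim \mathrm{Poisson}(\lambda_n/n)$ i.i.d., centers these summands, and invokes the triangular-array SLLN (Lemma~\ref{lem:slln}). So your remark that Lemma~\ref{lem:slln} ``does not apply directly'' because each $n$ contributes a single variable is true only up to this decomposition trick, which is exactly how the paper makes the lemma applicable. Your alternative --- bounding $\Prob(|P_n-\lambda_n|/n > \epsilon)$ via the fourth central moment $\lambda_n + 3\lambda_n^2 = O(n^2)$ from Lemma~\ref{lem:poisson}, obtaining a summable $O(1/n^2)$ tail, and applying the first Borel--Cantelli lemma --- is self-contained and is in fact the same technique the paper uses later in the proof of Lemma~\ref{lem:biasconv} to show $Y_n \to 0$. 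Your approach buys two things: it does not require the $P_n$ to be independent across $n$ (as you correctly note, the convergent direction of Borel--Cantelli needs no independence, whereas the paper's distributional-equality argument implicitly leans on independence of the sequence to transfer almost-sure convergence), and it sidesteps the uniform-boundedness hypothesis of the triangular-array SLLN, which the paper verifies only informally. What the paper's route buys is reuse of machinery it has already set up. Your handling of the deterministic drift term $\lambda_n/n - \lambda$ and the intersection over $\epsilon \downarrow 0$ is careful and complete; the bound remains summable even when $\lambda = 0$, so no case is missed.
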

\begin{proof}  Since $P_n \sim \mathrm{Poisson}(\lambda_n)$, we can write 
\[
    P_n = \sum_{i=1}^n Y_{ni}, \quad Y_{ni} \sim \mathrm{Poisson}(\lambda_n/n).
\]
Let $X_{ni} = Y_{ni} - \lambda_n/n$ so $\Exp(X_{ni}) = 0$.
Since $\lambda_n/n \rightarrow \lambda$, it can be verified that $X_{ni}$
is uniformly bounded by a random variable with $\Exp|X|< \infty$.
Therefore,
\begin{align}
   \lim_{n \rightarrow \infty}  \frac{P_n}{n} - \lambda 
   =\lim_{n \rightarrow \infty} \frac{1}{n}\left[ P_n - \lambda_n \right] = \lim_{n \rightarrow \infty} \frac{1}{n} \sum_{i=1}^n (Y_{ni} - \lambda_n/n)  =  \lim_{n \rightarrow \infty} \frac{1}{n} \sum_{i=1}^n X_{ni} = 0,
\end{align}
where we have used Lemma~\ref{lem:slln} and the convergence is almost surely.
\end{proof}

\subsection{Consistency of the Estimates of the Bias Terms}
We first prove the convergence of the bias terms.
\label{sub:consistency}
\begin{lemma}  \label{lem:biasconv}  Under the assumptions of Section~\ref{sec:results},
the biases $r_i^u$ and their corresponding estimates $\wh{r}^u_i$ converge
empirically to:
\begin{subequations}
\begin{align}
    \lim_{n \rightarrow \infty} \{(r_i^u, \wh{r}^u_i)\}_{i=1}^{m} \stackrel{PL(2)}{=} (R^u,R^u) 
    \label{eq:rulim} \\
    \lim_{n \rightarrow \infty} \{(r_j^v, \wh{r}^v_j)\}_{j=1}^{n} \stackrel{PL(2)}{=} (R^v,R^v) 
    \label{eq:rvlim} 
\end{align}
\end{subequations}
\end{lemma}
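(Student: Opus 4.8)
The plan is to prove \eqref{eq:rulim}; claim \eqref{eq:rvlim} follows by the symmetric argument. Since the excerpt records that $PL(2)$ convergence coincides with convergence in the Wasserstein-$2$ metric, and since $\{r_i^u\}\plconv R^u$ is assumed in \eqref{eq:suvlim}, it suffices to establish the almost-sure bound $\frac{1}{m}\sum_{i=1}^m|\wh r_i^u - r_i^u|^2\to 0$. Granting this, the coupling matching $(r_i^u,\wh r_i^u)$ to $(r_i^u,r_i^u)$ bounds the $W_2$ distance between their empirical laws by $(\frac{1}{m}\sum_i|\wh r_i^u - r_i^u|^2)^{1/2}\to 0$, while the empirical law of $(r_i^u,r_i^u)$ converges to that of $(R^u,R^u)$ by assumption; the triangle inequality for $W_2$ then yields \eqref{eq:rulim}.

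For the $L^2$ bound, set $S_i:=\sum_j Z_{ij}$, so that $\wh r_i^u = mS_i/Z_{\rm tot}$ with $Z_{\rm tot}=\sum_i S_i$. As the $Z_{ij}$ are independent Poisson, $S_i\sim\mathrm{Poisson}(\Lambda_i)$ with $\Lambda_i = r_i^u\sum_j r_j^v e^{\bs{u}_i^\intercal\bs{v}_j/\sqrt{m}}$, and $Z_{\rm tot}\sim\mathrm{Poisson}(\sum_{ij}\lambda_{ij})$. The sample-count computation of Section~\ref{sec:results} gives $\sum_{ij}\lambda_{ij}/(nm)\to\Exp(R^v)=:\bar\lambda$, so Lemma~\ref{lem:poissonlim} yields $Z_{\rm tot}/(nm)\to\bar\lambda$ almost surely. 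I would then use the exact (telescoping) decomposition
\[
    \wh r_i^u - r_i^u = \underbrace{S_i\Big(\tfrac{m}{Z_{\rm tot}}-\tfrac{1}{n\bar\lambda}\Big)}_{T_{1,i}} + \underbrace{\tfrac{S_i-\Lambda_i}{n\bar\lambda}}_{T_{2,i}} + \underbrace{\tfrac{\Lambda_i-r_i^u n\bar\lambda}{n\bar\lambda}}_{T_{3,i}}
\]
and control $\frac{1}{m}\sum_i T_{\ell,i}^2$ for $\ell=3,2,1$ in that order.

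The bias term $T_{3,i}$ is deterministic: Taylor-expanding $e^{\bs{u}_i^\intercal\bs{v}_j/\sqrt{m}}$ to first order splits $\Lambda_i - r_i^u n\bar\lambda$ into a piece $r_i^u n(\frac{1}{n}\sum_j r_j^v - \bar\lambda)$, which vanishes after division by $n\bar\lambda$ and averaging by $PL(2)$ convergence of $\{r_j^v\}$ and boundedness of $r_i^u$, and a piece $\frac{1}{\sqrt{m}}(\sqrt{r_i^u}[A]_{i\ast})^\intercal\sum_j\sqrt{r_j^v}[B]_{j\ast}=O(n/\sqrt{m})$, whose normalized square averages to $O(1/m)$ using the boundedness of the scaled vectors $[A]_{i\ast},[B]_{j\ast}$ (the quadratic Taylor remainder is smaller still). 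For the fluctuation term $T_{2,i}$, Lemma~\ref{lem:poisson} gives $\Exp T_{2,i}^2=\Lambda_i/(n\bar\lambda)^2=O(1/n)$ and $\Exp T_{2,i}^4=O(1/n^2)$; the $S_i$ are independent across $i$, so $\frac{1}{m}\sum_i T_{2,i}^2$ has mean $O(1/n)$ and summable variance, and Chebyshev with Borel--Cantelli (equivalently the triangular-array SLLN of Lemma~\ref{lem:slln}) gives almost-sure convergence to $0$. Finally $T_{1,i}=\frac{S_i}{n\bar\lambda}\varepsilon_n$ with $\varepsilon_n:=nm\bar\lambda/Z_{\rm tot}-1\to 0$ a.s.; since $\frac{S_i}{n\bar\lambda}=r_i^u+T_{2,i}+T_{3,i}$, the average $\frac{1}{m}\sum_i(S_i/n\bar\lambda)^2$ stays bounded (by $2\Exp((R^u)^2)<\infty$ asymptotically), so the scalar factor $\varepsilon_n^2$ forces $\frac{1}{m}\sum_i T_{1,i}^2\to 0$. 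Combining the three via $|a+b+c|^2\le 3(a^2+b^2+c^2)$ proves the bound.

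The main obstacle I anticipate is the average, almost-sure control of the Poisson fluctuations $T_{2,i}$: the pointwise estimate $\Exp T_{2,i}^2=O(1/n)$ must be upgraded to convergence of the empirical mean $\frac{1}{m}\sum_i T_{2,i}^2$, which is exactly where independence of the row sums $S_i$ and the fourth-moment formula of Lemma~\ref{lem:poisson} enter through Lemma~\ref{lem:slln}. A secondary difficulty is that $\wh r_i^u$ is a ratio whose numerator $S_i$ is one of the summands of its denominator $Z_{\rm tot}$; this self-coupling is defused by replacing $Z_{\rm tot}$ with its deterministic limit $nm\bar\lambda$ in $T_{1,i}$ and observing that a single row is a vanishing fraction of $Z_{\rm tot}$, so the induced error is negligible.
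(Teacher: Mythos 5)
Your proposal is correct and follows essentially the same route as the paper's proof: reduce $PL(2)$ convergence to the almost-sure vanishing of $\frac{1}{m}\sum_i|\wh r_i^u-r_i^u|^2$, split the error into denominator fluctuation, Poisson numerator fluctuation, and deterministic bias, and control these with Lemma~\ref{lem:poissonlim}, the Poisson moment formulas of Lemma~\ref{lem:poisson}, and Chebyshev/Borel--Cantelli. The only difference is cosmetic --- the paper clears the random denominator $B_n$ first and uses a four-term split of $B_n r_i^u - A_{ni}$ bounded via Lemma~\ref{lem:sumsq}, whereas you center the denominator at its deterministic limit and get three terms; your write-up also spells out the numerator terms that the paper dismisses as ``similar.''
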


\begin{proof}  We will prove \eqref{eq:rulim}; the proof of \eqref{eq:rvlim} is similar.  Also,
to be clear, we will use $r_{ni}^u$ and $\wh{r}_{ni}^u$
for $r_i^u$ and $\wh{r}_i^u$ to make the dependence on $n$
in these quantities explicit.
Fix any $PL(2)$ function $\phi(r,\wh{r})$.
We need to show 
\begin{equation} \label{eq:philimrr}
    \lim_{n \rightarrow \infty} \frac{1}{m(n)} \sum_{i=1}^{m(n)} \phi(r_{ni}^u, \wh{r}_{ni}^u) = \Exp(\phi(R^u, R^u)).
\end{equation}
From the assumption \eqref{eq:suvlim}, 
we know $\{r_{ni}^u\}_{i=1}^m \plconv R^u$, and therefore:
\begin{align}
    \MoveEqLeft \lim_{n \rightarrow \infty} \frac{1}{m(n)} \sum_{i=1}^{m(n)} \phi(r_{ni}^u, \wh{r}_{ni}^u) = \Exp(\phi(R^u, R^u)) 
     + \lim_{n \rightarrow \infty} \frac{1}{m(n)} \sum_{i=1}^{m(n)} \left[ \phi(r_{ni}^u, \wh{r}_{ni}^u) 
     -\phi(r_{ni}^u, r_{ni}^u)  \right].
\end{align}
Since $\phi(\cdot)$ is $PL(2)$, to prove \eqref{eq:philimrr}, it suffices to show:
\begin{align} \label{eq:rrlim}
    \MoveEqLeft \lim_{n \rightarrow \infty} \frac{1}{m} \sum_{i=1}^{m}(r_{ni}^u - \wh{r}_{ni}^u)^2  = 0,
\end{align}
almost surely.  In \eqref{eq:rrlim},
we have dropped the dependence of $m$
on $n$ to simplify the notation.
From \eqref{eq:rest}, we can  
write the estimate $\wh{r}^u_{ni}$ as a fraction:
\begin{equation} \label{eq:rhatratio}
    \wh{r}^u_{ni} = \frac{A_{ni}}{B_n}, \quad A_{ni} = \frac{A_{ni}'}{n} 
    \quad 
    B_n = \frac{B_n'}{nm} 
\end{equation}
and
\begin{equation} \label{eq:ABprimedef}
    A_{ni}' =  \sum_{j=1}^n Z_{ij},
    \quad 
    B_n' = \sum_{i=1}^m \sum_{j=1}^n Z_{ij}.
\end{equation}
Therefore, to prove \eqref{eq:rrlim},
we need to show
\begin{align} \label{eq:rrlim1}
   \lim_{n \rightarrow \infty} \frac{S_n}{B_n^2} = 0,
\end{align}
where
\begin{equation} \label{eq:Sndef}
   S_n = \frac{1}{m} \sum_{i=1}^{m} \epsilon_{ni}^2,
   \quad \epsilon_{ni} := 
    B_nr_{ni}^u - A_{ni}.
\end{equation}
We will prove \eqref{eq:rrlim1} by showing
\begin{equation} \label{eq:BSlim}
    \lim_{n\rightarrow \infty} B_n = \Exp(R^v) \quad , \quad \lim_{n\rightarrow \infty} S_n = 0
\end{equation}
almost surely.
From \eqref{eq:z_ij}, the expectation of
$Z_{ij}$ is:
\begin{align} \label{eq:expzij}
    \Exp(Z_{ij}) &= \lambda_0\exp(s^u_i + s^v_j) + O(1/\sqrt{m}) = r_i^u r_j^v + O(1/\sqrt{m}),
\end{align}
where, in the second step, we used
\eqref{eq:ruv_def} and the assumption
\eqref{eq:bias_norm} that $\lambda_0 = 1$.
Also, since the variables $Z_{ij}$
are independent Poisson random variables,
$A_{ni}'$ and $B_n'$ in \eqref{eq:ABprimedef}
are Poisson random variables with expectations:
\begin{equation}
    \Exp(A_{ni}') = n \Exp(A_{ni})\quad,\quad
    \Exp(B_n') = nm \Exp(B_n) 
\end{equation}
The limit of these expectations are:
\begin{align}
    \lim_{n \rightarrow \infty} 
    \Exp(B_n) &\stackrel{(a)}{=} \lim_{n \rightarrow \infty} \frac{1}{nm} \sum_{i=1}^m\sum_{j=1}^n  \Exp(Z_{ij}) \nonumber \\
    &\stackrel{(b)}{=} \lim_{n \rightarrow \infty} \left(\frac{1}{m} \sum_{i=1}^m r_i^u
    \right) \left(\frac{1}{n} \sum_{j=1}^n r_j^v
    \right) \nonumber \\
    &\stackrel{(c)}{=} \lim_{n \rightarrow \infty} \frac{1}{n} \sum_{j=1}^n r_j^v
    \stackrel{(d)}{=} \Exp(R^v)
    \label{eq:Bnlim}
\end{align}
where the convergence is almost surely 
and (a) follows from \eqref{eq:ABprimedef};
(b) follows from \eqref{eq:expzij}; (c) follows from the
normalization assumption \eqref{eq:bias_norm}; and
(d) follows from the $PL(2)$ convergence
assumption \eqref{eq:rrlim}.
Similarly,
\begin{align}
    \lim_{n \rightarrow \infty} \frac{\Exp(A_{ni})}{r_i^u}
    &=\lim_{n \rightarrow \infty} \frac{1}{n} 
    \frac{1}{r_i^u} \sum_{j=1}^n  \Exp(Z_{ij}) 
    \stackrel{(a)}{=} \lim_{n \rightarrow \infty} \frac{1}{n} \sum_{j=1}^n r_j^v  
    \stackrel{(b)}{=} \Exp(R^v) \label{eq:Anlim}
\end{align}
where, again (a) follows from \eqref{eq:expzij}
and (b) follows from the $PL(2)$ convergence
assumption \eqref{eq:rrlim}.
Since $B_n = B_n'/(nm)$
and $B_n'$ is Poisson, \eqref{eq:Bnlim} and Lemma~\ref{lem:poissonlim}
show that 
\begin{equation} \label{eq:Bnlim1}
    B_n \rightarrow  \Exp(R^v)
\end{equation}
almost surely.  The limit \eqref{eq:Bnlim1} is the first
of the two limits in \eqref{eq:BSlim} that we need to show.
Next, we show that $S_n \rightarrow 0$ almost surely; that is,
we show the second limit in \eqref{eq:BSlim}.
To this end,
write the error terms $\epsilon_{ni}$ as  a sum of four terms:
\begin{equation}
    \epsilon_{ni} = \sum_{k=1}^4 \epsilon_{ni}^{(k)},
\end{equation}
where
\begin{subequations}
    \begin{align}
        \epsilon_{ni}^{(1)} &:= r_{ni}^u(B_n - \Exp(B_n)) \\
        \epsilon_{ni}^{(2)} &:= r_{ni}^u(\Exp(B_n) - \Exp(R^v)) \\
        \epsilon_{ni}^{(3)} &:= \Exp(A_{ni})-A_{ni} \\
        \epsilon_{ni}^{(4)} &:= r_{ni}^u(\Exp(R^v)-\frac{\Exp(A_{ni})}{r_{ni}^u} )
    \end{align}
\end{subequations}
Hence, if we define:
\begin{equation}
    S_n^{(k)}= \frac{1}{m} \sum_{i=1}^{m} (\epsilon_{ni}^{(k)})^2,
\end{equation}
Lemma~\ref{lem:sumsq} shows that
\begin{equation}
    S_n \leq 4^2 \max_{k=1,\ldots,4} S_n^{(k)}.
\end{equation}
Therefore, we can show that $S_n \rightarrow 0$ almost surely if 
\begin{equation} \label{eq:Snklim}
    \lim_{n\rightarrow \infty} S_n^{(k)} = 0 \mbox{ for all }
    k =1,\ldots,4
\end{equation}
almost surely.  
We prove \eqref{eq:Snklim} for the cases $k=1$ and $k=2$. The other two are proven in a similar manner.
For $k=1$:  
\begin{align}
     S_n^{(1)} &= \frac{1}{m} \sum_{i=1}^{m} (\epsilon_{ni}^{(1)})^2 \leq \bar{r}^2_{\rm max}(B_n - \Exp(B_n))^2  
\end{align}

Let $Y_{n} = (B_n - \Exp(B_n))^2$ so we need to show that
$Y_n \rightarrow 0$ almost surely.
From \eqref{eq:rhatratio}, we have:
\[
    Y_{n} = \frac{1}{(nm)^2} (B_n' - \Exp(B_n'))^2
\]
Since $B_n'$ in \eqref{eq:ABprimedef} is a Poisson random variable with $\Exp(B_n')=O(mn)$, Lemma \ref{lem:poisson} shows    :
\begin{align} \label{eq:Ynsq}
   \Exp (Y_n^2) =  \frac{1}{(mn)^4}\Exp[(B_n' - \Exp(B_n'))]^4&= O\left(\frac{1}{m^2n^2}\right) 
\end{align}  
For any $\delta>0$, Chebyshev inequality gives:
\begin{align}
    \mathbb{P}(|Y_n|\geq \delta)&\leq \frac{\Exp(Y_{n}^2)}{\delta^2}
\end{align}
Therefore, from \eqref{eq:Ynsq},
\begin{equation}
    \sum_n \mathbb{P}(|Y_n|\geq \delta) = \frac{1}{\delta^2} 
        \sum_n O\left(\frac{1}{m^2n^2}\right) < \infty.
\end{equation}
So, by the Borel-Cantelli lemma \cite{mileBorelLesPD,cantelli1917}, the
event that $P(|Y_n| \geq \delta)$ can occur only finitely many times.
Since this is true for all $\delta$, $Y_n \rightarrow 0$ almost surely.


For $k=2$:
\begin{align}
 \Exp(B_n) &=  \frac{1}{nm} \sum_{i=1}^m\sum_{j=1}^n  \Exp(Z_{ij}) = \Exp(R^v) + O(1/\sqrt{m})
\end{align}
So,
\begin{align}
 (\Exp(B_n)- \Exp(R^v))^2  =  O(1/m).
\end{align}
Hence,
\begin{align}
     S_n^{(2)} &= \frac{1}{m} \sum_{i=1}^{m} (\epsilon_{ni}^{(2)})^2 
     \leq \bar{r}^2_{\rm max}\frac{1}{m} \sum_{i=1}^{m} (\Exp(B_n) - \Exp(R^v))^2 = O(1/m).     
\end{align}
This gives $\lim_{n\rightarrow\infty}S_n^{(2)} = 0$.
Having proven \eqref{eq:Snklim} for $k=1,2,3,4$ we can then apply the strong law 
of large numbers to show that
$S_n$ in \eqref{eq:Sndef}
converges as $S_n \rightarrow  0$ almost surely. 
\end{proof}

\subsection{Vector-Valued Bayati-Montanari Recursion}
\label{sub:bayati}
In order to prove Theorem \ref{theorem1},
we next need a vector-valued generalization of the Bayati-Montanari recursions \cite{bayati2011dynamics}.
Consider a sequence of recursions, indexed by $n$.  For each $n$,
let $m=m(n)$ satisfying \eqref{eq:beta} for some $\beta > 0$
Let $W \in \mathbb{R}^{n \times m}$ be an i.i.d.\ Gaussian matrix
with entries $W_{ij} \sim {\mathcal N}(0,1)$.
For $k=0,1,\ldots$, consider a general recursion of the form:
\begin{subequations} \label{eq:bmgen}
\begin{align}
    T_k &= \frac{1}{\sqrt{m}}W\wh{B}_k + \wh{A}_{k-1}\Psi^u_k, 
    \\ [\wh{A}_{k}]_{i\ast} &= H_u([T_k]_{i\ast}, Z^u_i,\theta^u_k),\\
    S_k &= \frac{1}{\sqrt{m}}W^\intercal \wh{A}_{k} + \wh{B}_k\Psi^v_k, \\ [\wh{B}_{k+1}]_{j\ast} &= H_v( [S_k]_{j\ast}, Z^v_j, \theta^v_k),
\end{align}
\end{subequations}
which generates a sequence of sets of matrices $(\wh{A}_k,\wh{B}_k,T_k,S_k)$ 
for $k=0,1,\ldots$ with dimensions:
\begin{equation}
    \wh{A}_k, ~T_k \in \mathbb{R}^{m \times d},
    \quad
    \wh{B}_k,~ S_k \in \mathbb{R}^{n \times d},
\end{equation}
for some fixed dimension $d$
(i.e., $d$ does not vary with $n$).
Here, $Z_i^u$ and $Z_j^v$ are variables
that do not change with the index $k$ and 
$H_u(.)$, $H_v(.)$ are functions that are Lipschitz
continuous with Lipschitz continuous derivatives
that operate on the rows
of $T_k$ and $S_k$.  The parameters $\theta^u_k$
and $\theta^v_k$ are assumed to follow updates of the form: 
\begin{subequations}
\begin{align}\label{eq:theta-general}
    \theta^u_k &= \frac{1}{n} \sum_{j=1}^n
        \phi_u([B_k]_{j\ast}, Z_j^v), \\
    \theta^v_k &= \frac{1}{m} \sum_{i=1}^m
    \phi_v([A_k]_{i\ast}, Z_i^u), 
\end{align}
\end{subequations}
for any pseudo-Lipschitz 
continuous functions $\phi_u(\cdot)$
and $\phi_v(\cdot)$.  
Also,
\begin{subequations}
\begin{align}
    \Psi^v_k &= -\frac{1}{m}\sum_{i=1}^m \partial H_u( [T_k]_{i\ast},Z^u_i,\theta^u_k) /\partial [T_k]_{i\ast}^{\intercal}\\
    \Psi^u_{k} &= -\frac{1}{n}\sum_{j=1}^n \partial H_v([S_k]_{j\ast},Z_j^v,\theta^v_k) /\partial [S_k]_{j\ast}^{\intercal}
\end{align}
\end{subequations}
Assume that parameters $Z^u_i$ and 
$Z^v_j$ and the rows of the initial conditions $\wh{A}_0$ and $\wh{B}_0$ converge as:
\begin{subequations}  \label{eq:bminitlim}
\begin{align} 
    & \{([\wh{A}_0]_{i\ast}, Z^u_i)\}_{i=1}^m \stackrel{PL(2)}{\longrightarrow} (\mc{A}_0,
    \mc{Z}^u), 
    \\
     & \{[(\wh{B}_0]_{i\ast}, Z^v_j)\}_{j=1}^n \stackrel{PL(2)}{\longrightarrow} (\mc{B}_0,
    \mc{Z}^v),
\end{align}
\end{subequations}
for some random vectors $\mc{A}_0$,
$\mc{B}_0$, $\mc{Z}^u$, and $\mc{Z}^v$.
Define:
\begin{align}
     \bar{\theta}^u_k := \Exp(\phi_u(\mc{B}_k,\mc{Z}^v))\quad   
      \bar{\theta}^v_k := \Exp(\phi_v(\mc{A}_k,\mc{Z}^u))  
\end{align}
where $\mc{A}_k$ and $\mc{B}_k$ for $k=1,2,...$ can be calculated using the SE below:
\begin{subequations}
    \begin{align}
        \mc{T}_k &\sim \mc{N}(0,\mathbb{E}(\mc{B}_k^\intercal \mc{B}_k))\\
        \mc{A}_{k} &= H_u(\mc{T}_k,\mc{Z}^u,\bar{\theta}^u_k)\\
        \mc{S}_k &\sim \mc{N}(0,\mathbb{E}(\mc{A}_{k}^\intercal \mc{A}_{k}))\\
        \mc{B}_{k+1} &= H_u(\mc{S}_k,\mc{Z}^v,\bar{\theta}^v_k)
    \end{align}
\end{subequations}

\begin{theorem} \label{thm:bm}
Under the above assumptions, for any fixed iteration
$k$,
\begin{equation} \label{eq:bm-theta}
    \lim_{n\rightarrow \infty} \theta_k^u = \bar{\theta}_k^u,
    \quad
     \lim_{n\rightarrow \infty} \theta_k^v = \bar{\theta}_k^v,
\end{equation}
almost surely and
\begin{subequations} \label{eq:bm-plconv}
  \begin{align}
     \lim_{n \rightarrow \infty}
     \{ ([\wh{A}_{k}]_i, Z^u_i) \}
    &= (\mc{A}_{k}, \mc{Z}^u ) \\
     \lim_{n \rightarrow \infty} \{ ([\wh{B}_{k+1}]_{j\ast},Z^v_j) \}
    &= (\mc{B}_{k+1}, \mc{Z}^v) 
\end{align}  
\end{subequations}
where the convergence is $PL(2)$.
\end{theorem}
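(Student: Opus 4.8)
The plan is to prove Theorem~\ref{thm:bm} by the conditioning technique of Bayati and Montanari \cite{bayati2011dynamics}, adapted to the vector-valued, rectangular, parameter-dependent recursion \eqref{eq:bmgen}, proceeding by induction on the iteration index $k$. The inductive hypothesis at step $k$ would assert the joint $PL(2)$ convergence \eqref{eq:bm-plconv} of the rows $([\wh{A}_k]_{i\ast},Z^u_i)$ and $([\wh{B}_k]_{j\ast},Z^v_j)$ to the state-evolution variables $(\mc{A}_k,\mc{Z}^u)$ and $(\mc{B}_k,\mc{Z}^v)$, together with the scalar convergence \eqref{eq:bm-theta} of the parameters $\theta^u_k,\theta^v_k$. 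The first reduction is to dispose of the data-dependence of the parameters: since $\theta^u_k$ and $\theta^v_k$ are empirical averages of pseudo-Lipschitz functions $\phi_u,\phi_v$ of the current iterates, the inductive $PL(2)$ convergence immediately gives $\theta^u_k\to\bar\theta^u_k=\Exp(\phi_u(\mc{B}_k,\mc{Z}^v))$ and similarly for $\theta^v_k$; using Lipschitz continuity of $H_u,H_v$ in their parameter argument, one may then replace $\theta^u_k,\theta^v_k$ by the deterministic limits $\bar\theta^u_k,\bar\theta^v_k$ inside the denoisers with an error that is negligible in $PL(2)$.

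The heart of the argument is the Gaussian conditioning step. The iterates depend on $W$ only through the linear images $W\wh{B}_0,\dots,W\wh{B}_k$ and $W^\intercal\wh{A}_0,\dots,W^\intercal\wh{A}_k$, so one would introduce the filtration $\mathfrak{S}_k$ generated by the history up to iteration $k$ and compute the conditional law of $W$ given $\mathfrak{S}_{k-1}$. This law is Gaussian: $W$ decomposes as its conditional mean, which lies in the span of the previously observed directions, plus a fresh Gaussian $\tl{W}$ supported on the orthogonal complement. Applying this decomposition to $T_k=\tfrac{1}{\sqrt m}W\wh{B}_k+\wh{A}_{k-1}\Psi^u_k$, the conditional-mean contribution produces a term lying in the span of $\{\wh{A}_0,\dots,\wh{A}_{k-1}\}$; the key algebraic identity is that, in the large-$n$ limit, the Onsager correction $\wh{A}_{k-1}\Psi^u_k$, with $\Psi^u_k$ equal to the empirical average of $\partial H_u/\partial T$, cancels precisely this memory term. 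This cancellation is where Stein's identity (Gaussian integration by parts) enters, and it uses the assumption that $H_u,H_v$ have Lipschitz-continuous derivatives. What remains is the fresh-Gaussian term $\tfrac{1}{\sqrt m}\tl{W}\wh{B}_k$, whose conditional covariance is governed by the empirical second-moment matrix of the rows of $\wh{B}_k$; by the inductive $PL(2)$ hypothesis this converges to $\Exp(\mc{B}_k^\intercal\mc{B}_k)$, matching the covariance of $\mc{T}_k$ in the state evolution. Feeding this effective Gaussian through the row-separable Lipschitz map $H_u$ then propagates the $PL(2)$ convergence to $\wh{A}_k$, and the symmetric computation with $W^\intercal$ handles $\wh{B}_{k+1}$.

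To upgrade the distributional statements to almost-sure $PL(2)$ convergence I would combine Gaussian concentration for the fresh term with the strong-law and Borel--Cantelli arguments already used in Lemma~\ref{lem:biasconv}, controlling (i) the deviation of the empirical Gram matrices of the rows of $\wh{A}_k$ and $\wh{B}_k$ from their limits and (ii) the residual from projecting $W$ onto the finite-dimensional constraint space. Alternatively, once the parameters are frozen and the recursion is cast in the canonical separable form, one may invoke a general vector-valued AMP master theorem such as that of \cite{emami2020} and verify its hypotheses directly.

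I expect the main obstacle to be the careful bookkeeping of the accumulated error terms across the induction: one must show that the non-degeneracy of the Gram matrices of the past iterates is preserved (so the orthogonal projections defining $\tl{W}$ are well behaved), that the Onsager cancellation is exact up to $o(1)$ uniformly in the row index, and that the vector-valued side information $Z^u_i,Z^v_j$ and the extra parameters $\theta_k$ can be carried through each conditioning step without breaking the joint $PL(2)$ convergence. This is the step where the vector-valued and parameter-dependent nature of \eqref{eq:bmgen} genuinely extends the scalar argument of \cite{bayati2011dynamics}.
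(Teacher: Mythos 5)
Your outline is essentially the argument the paper relies on: the paper's own proof of Theorem~\ref{thm:bm} is only a proof by citation, deferring the scalar case to \cite{bayati2011dynamics}, the matrix-valued (fixed $d>1$) extension to \cite{pandit2021matrix}, and the treatment of the data-dependent parameters $\theta^u_k,\theta^v_k$ to the adaptive-parameter technique of \cite{kamilov2012approximate}. Your sketch---induction on $k$, reduction of $\theta^u_k,\theta^v_k$ to empirical averages of pseudo-Lipschitz functions that converge under the inductive hypothesis, Gaussian conditioning with Onsager cancellation, and propagation of $PL(2)$ convergence through the row-separable Lipschitz denoisers---is precisely the content of those references, so the approach matches; the paper simply does not spell out any of these steps.
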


\begin{proof}  The result for the case $d=1$
was proven in the original work by 
Bayati and Montanari \cite{bayati2011dynamics}.
An extension to the matrix-valued case (i.e., $d > 1$)
can be found in \cite{pandit2021matrix}.
The works \cite{bayati2011dynamics} and
\cite{pandit2021matrix} 
however, do not include the 
data-dependent parameters $\theta^u_k$ and $\theta^v_k$.  The addition of the parameters
can be done along the lines of \cite{kamilov2012approximate}.
\end{proof}

\subsection{Proof of Theorem~\ref{theorem1}}
To apply Theorem~\ref{thm:bm}, we
write Algorithm~\ref{alg:low-ramp}
in the format of \eqref{eq:bmgen}.
Define
\begin{equation} \label{eq:zuvdef}
    Z^u_i = ([A]_{i\ast},r_i^u,\wh{r}_i^u), \quad
    Z^v_j = ([B]_{j\ast},r_j^v,\wh{r}_j^v).
\end{equation}
and 
\begin{equation}\label{eq:theta-specific}
    \theta_k^u =  (M_k^b, F_k^a), \quad
    \theta_k^v =  (M_k^a, F_k^b),
\end{equation}
Assumption \eqref{eq:suvlim} shows
\eqref{eq:bminitlim} is satisfied
if we define the random variables:
\begin{equation} \label{eq:thetarv}
    \mc{Z}^u := (\mc{A},R^u), \quad
    \mc{Z}^v= (\mc{B},R^v).
\end{equation}
Next define:
\begin{align}\label{eq:TSdef}
   T_k := P^a_k - A M^b_k  \quad
   S_k := P^b_k - B M^a_k.
\end{align}    
where:
\begin{align}\label{eq:defM}
    M_k^b = \frac{1}{m} B^\intercal \wh{B}_k \quad M_k^a &= \frac{1}{m} A^\intercal \wh{A}_k
\end{align}
We also define the equivalent denoisers as:
\begin{subequations}
    \begin{align}
        H_u([T_k]_{i\ast},Z_i^u,\theta^u_k) &:= G_a([T_k]_{i\ast}+[A]_{i\ast} M^b_k,r^u_i,F^a_k)\\
        H_v([S_k]_{i\ast},Z_j^v, \theta^v_k) &:= G_b([S_k]_{i\ast}+[B]_{i\ast} M^a_k,r^v_j,F^b_k)
    \end{align}
\end{subequations}
and:
\begin{align}\label{eq:PsiGamma}
    \Psi^v_k := -\Gamma_k^b \quad
    \Psi^u_{k} := -\Gamma_{k}^a 
\end{align}    
Also:
\begin{align}
    T_k &\stackrel{(a)}{=} \frac{1}{\sqrt{m}}\tl{Y}\wh{B}_k - \wh{A}_{k-1}\Gamma_k^a - AM_k^b \nonumber \\
        &\stackrel{(b)}{=} \frac{1}{\sqrt{m}}W\wh{B}_k + \frac{1}{m} A B^\intercal \wh{B}_k- \wh{A}_{k-1}\Gamma_k^a - AM_k^b \nonumber \\
        &\stackrel{(c)}{=} \frac{1}{\sqrt{m}}W\wh{B}_k + A M_k^b - AM_k^b - \wh{A}_{k-1}\Gamma_k^a \nonumber \\
        &\stackrel{(d)}{=} \frac{1}{\sqrt{m}}W\wh{B}_k + \wh{A}_{k-1}\Psi^u_k
\end{align}
where (a) follows from \eqref{eq:TSdef} and the update for $P^a_k$ in Algorithm~\ref{alg:low-ramp}; 
(b) follows from \eqref{eq:YGauss}; (c) follows from the definition of $M_k^b$ in \eqref{eq:defM}, and (d) follows from \eqref{eq:PsiGamma}. Similar arguments can be made for $S_k$.
Finally, from \eqref{eq:theta-specific} observe that 
\begin{align}
    M_k^b &=  \frac{1}{m} \sum_{j=1}^n [B]_{j\ast}^\intercal 
     [\wh{B}_k]_{j\ast} =  \frac{1}{n} \sum_{j=1}^n \frac{1}{\beta}[B]_{j\ast}^\intercal 
     [\wh{B}_k]_{j\ast} \nonumber \\
     F^a_k 
     &= \frac{1}{n} \sum_{j=1}^n \left( \frac{1}{\beta}[\wh{B}_k]_{j\ast}^\intercal 
     [\wh{B}_k]_{j\ast}-\frac{\partial H_v([S_k]_{j\ast},Z_j^v,\theta^v_k)}{\partial [S_k]_{j\ast}^{\intercal}} \right)
\end{align}
Hence, the update for $\theta^u_k$ in \eqref{eq:theta-specific} can be written in the form \eqref{eq:theta-general} for appropriate $\phi_u$.  Similarly, $\theta^v_k$
can also be written in the form form \eqref{eq:theta-general} for an appropriate $\phi_v$. Overall, we have
shown that Algorithm~\ref{alg:low-ramp} can be written 
in the form of \eqref{eq:bmgen} and we can apply
Theorem~\ref{thm:bm}.  Then, \eqref{eq:bm-theta}
and \eqref{eq:bm-plconv} show \eqref{eq:param-conv}
and \eqref{eq:empiricalA}, respectively and the proof is complete.

\section{Numerical Experiments}
\label{sec:simulations}

\subsection{Denoisers}
We consider experiments with denoisers for two standard
regularizers:  
squared-norm ($L_2$) and sparsity-inducing ($L_1$).
\subsubsection{Squared-norm regularizers}
In this case, the regularizers are given by: 
\begin{equation} \label{eq:regl2}
    \phi_u(U) = \frac{\lambda_u}{2} \sum_{i=1}^m \|\bs{u}_i\|_2^2,
    \quad
    \phi_v(V) = \frac{\lambda_v}{2} \sum_{j=1}^n \|\bs{v}_j\|_2^2
\end{equation}
A standard least-squares calculation shows that the 
denoisers \eqref{eq:Gadef}
are given by:
\begin{subequations}
\begin{align}
   G_a([P^a_k]_{i\ast},r^u_i,F^a_k) &= [P^a_k]_{i\ast}(F^a_k+\frac{\lambda_u}{r^u_i}  I_d)^{-1}\\ 
  G_b([P_k^b]_{j\ast},r^v_j, F_k^b) &= [P_k^b]_{j\ast} (F_k^b + \frac{\lambda_v}{r^v_j}  I_d)^{-1}   
\end{align}    
\end{subequations}

\subsubsection{Sparsity inducing regularizers}
In this case, the regularizers are given with the $L_1$-norm:
\begin{equation} \label{eq:regsparse}
    \phi_u(U) = \lambda_u \sum_{i=1}^m \|\bs{u}_i\|_1,
    \quad
    \phi_v(V) = \lambda_v \sum_{j=1}^n \|\bs{v}_j\|_1
\end{equation}
The denoiser \eqref{eq:Gadef}
can then be implemented with a LASSO problem.
Let $a_i = [A]_{i\ast}^\intercal$ (a column vector).  Then, the denoiser optimization \eqref{eq:Gadef-row} can be written as:
\begin{equation}\label{eq:l1-denoiserA}
    G_a([P^a_k]_{i\ast},r_i^u,F^a_k) = \argmin_{a} \frac{1}{2} \|W^a a - q\|^2_2+ \frac{\lambda_u}{\sqrt{r_i^u}}  \|a\|_1
\end{equation}
where
\begin{equation}
    W^a = (F^a_k)^{1/2}, \quad q = (W^a)^{-1}[P_k^a]_{j\ast}^\intercal.
\end{equation}
The denoiser $G_b(\cdot)$ is defined similarly.

\subsection{Synthetic data}
To validate the SE equations, we first 
consider a simple synthetic data example.
We use $m=2000, n=3000, d=10$ and 
use $L_2$ regularizers \eqref{eq:regl2} with 
 $\lambda_u = \lambda_v = 10^{-3}$. We generate rows of true matrices $U_0$ and $V_0$ following:
 \begin{subequations}\label{eq:sim-true}
  \begin{align}
    \bs{u}_i &\sim \mathcal{N}(0,0.1 I) \quad i \in [m]\\
    \bs{v}_j &\sim \mathcal{N}(0,0.1 I) \quad j \in [n]
\end{align}   
 \end{subequations}

To generate the problem instance, we assume that $s^u_i$'s and $s^v_j$'s are randomly selected from an exponential distribution with parameter $0.25$. In order to ensure that the average $\Delta$ is below the critical value in \eqref{eq:delta-critical}, we shift all these biases by $5$.
We will use estimations of these biases via \eqref{eq:sest} in our Algorithms. We run Algorithm~\ref{alg:low-ramp} for 20 instances and average our results. The expectations in
the state evolution, Algorithm~\ref{alg:se}, are also
computed with 20 Monte Carlo trials in each iteration.
We initialize the $\wh{A}_k$ and $\wh{B}_k$ matrices with i.i.d.\ entries with zero mean and unit variance Gaussian distributions. 
Fig.~\ref{fig:20runs-loss} shows the loss function \eqref{eq:LAB} (normalized by the true loss) vs iterations, averaged over 20 instances of the problem.  We see that the  average of the loss function observed in the simulations closely matches the predicted training loss from the SE.

\begin{figure}[t]%
    \centering
    \subfloat[\centering \label{fig:20runs-loss}]{{\includegraphics[width=0.45\linewidth]{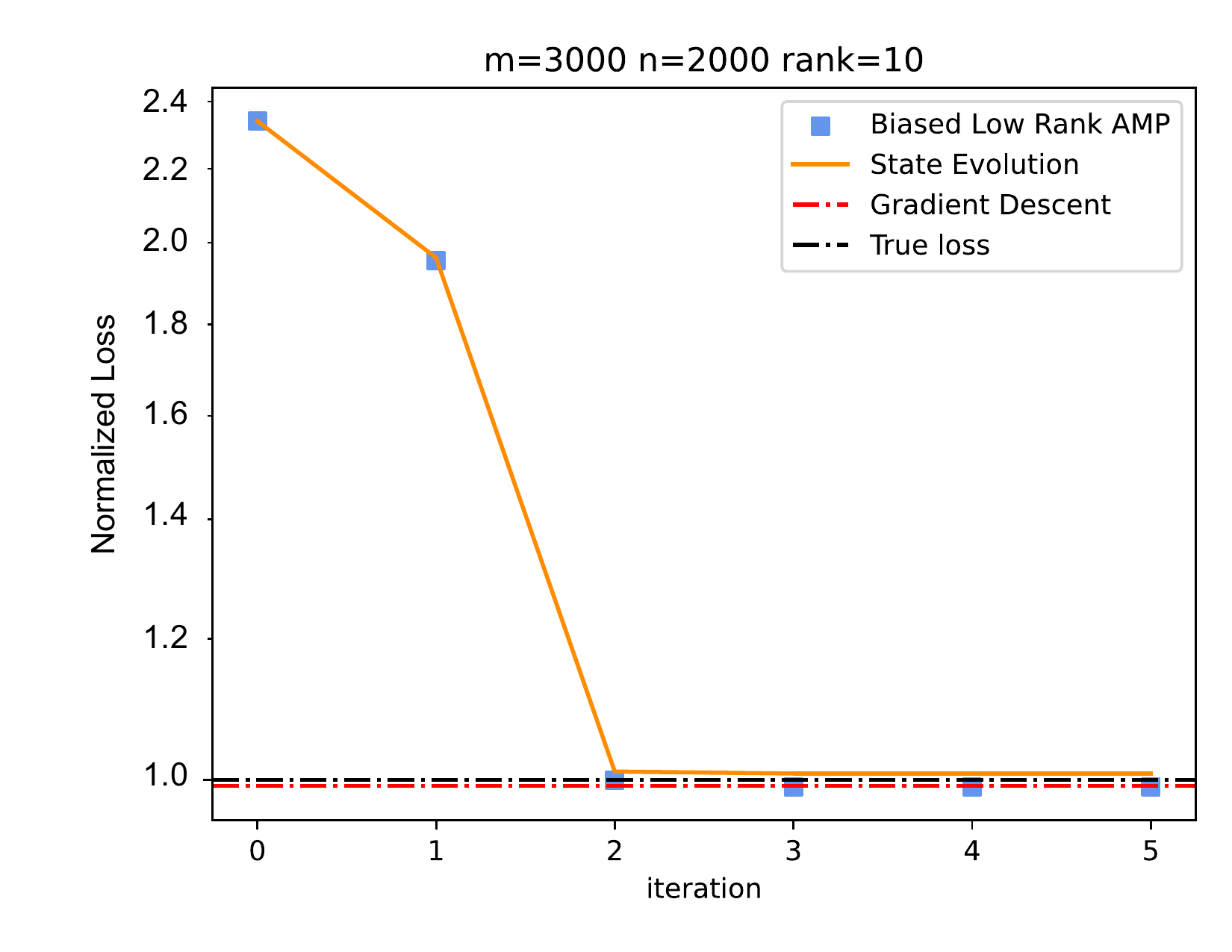} }}%
    \qquad
    \subfloat[\centering \label{fig:mse}]{{\includegraphics[width=0.45\linewidth]{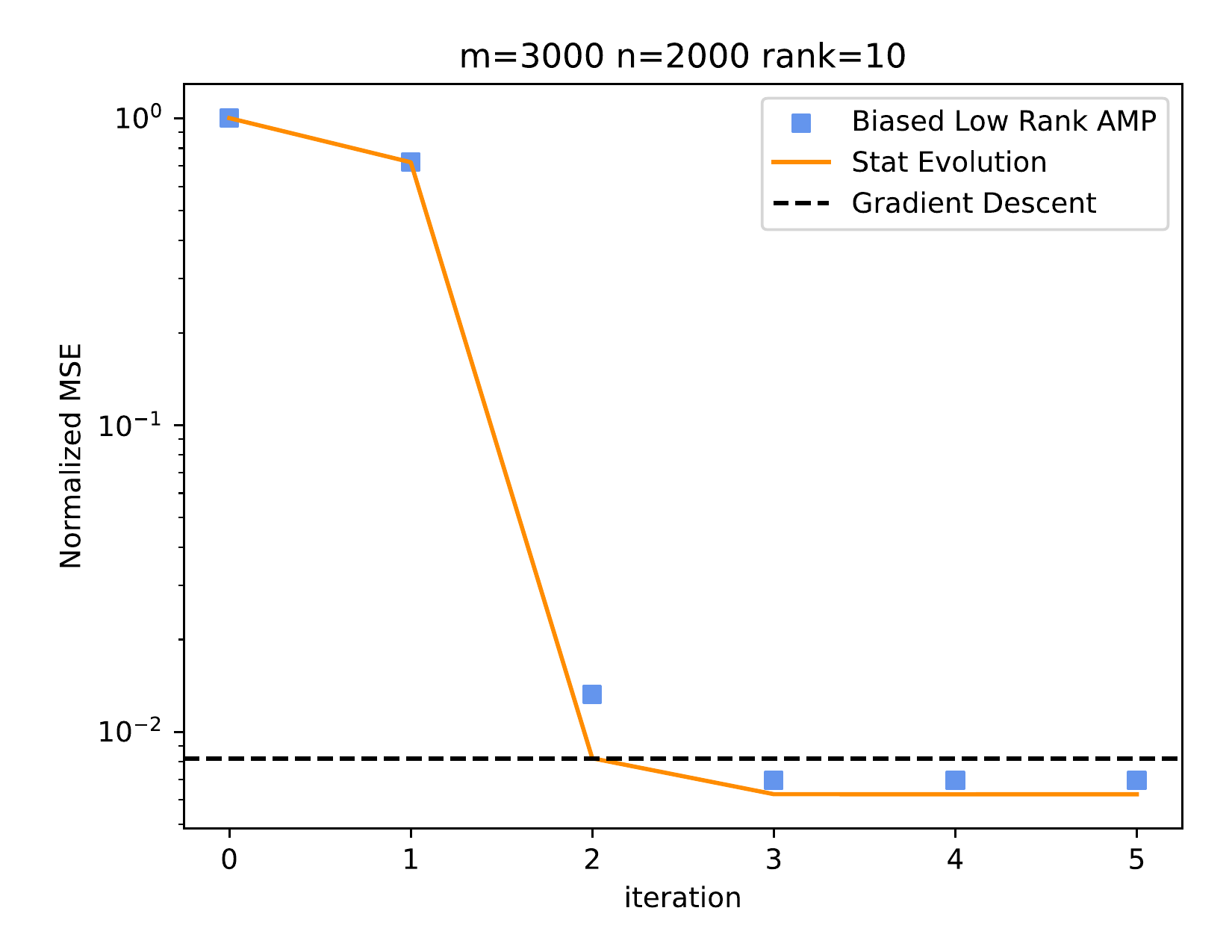} }}%
    \caption{Normalized loss (a) and MSE (b) vs iteration averaged over 20 instances, evaluated for an instance of the problem with $m=2000$, $n=3000$, $d=10$, and squared norm regularizers.}%
    \label{fig:example}%
\end{figure}

We can also use the SE to estimate the  error on the correlation terms:
For each iteration index $k$, let $M_{ij}$ and $\wh{M}^k_{ij}$ denote the true and estimated correlation values:
\begin{equation} \label{eq:mmhat}
    M_{ij} = [A]_{i*}[B]_{j*}^\intercal,
    \quad
    \wh{M}^k_{ij} = [\wh{A}_k]_{i*} [\wh{B}_k]^\intercal_{j*}
\end{equation}
At each iteration $k$, defined  the normalized MSE as:
\begin{equation}
    \textit{MSE}_k := \frac{\mathbb{E}(M_{ij}  - \wh{M}^k_{ij})^2}{\mathbb{E}(M_{ij})^2},
\end{equation}
where the expectation is over the indices
$i$ and $j$. This MSE corresponds to how
well the true correlation of the events $X_1=i$
and $X_2=j$ are predicted.
We can similarly obtain a prediction 
of the MSE from the SE.
Fig.~\ref{fig:mse}
shows the simulated MSE and SE predictions
as a function of the iteration.  Again, we see
an excellent match. The convergence result of applying Gradient Descent (GD) to the same problem is provided in the figures as a reference. Since GD usually takes a few thousands iterations to converge, we have only plotted the final convergence point.  The final error of GD is similar to Biased Low Rank AMP since they both converge to critical points of the loss function. The point is that the performance of the biased Low Rank AMP algorithm  can be exactly predicted with state evolution.

We repeat a similar experiment for sparse $U$ and $V$ using regularizers defined in \eqref{eq:regsparse}. To define the sparse matrices we define the rows of matrices similar to \eqref{eq:sim-true} and then randomly set half of the elements in each row to zero. 
The sampling process of bias terms and selection of all the other parameters are the same as the previous experiment. In order to find the solutions to denoisers \eqref{eq:l1-denoiserA}, we use the Lasso function in the Scikit-learn library (\cite{scikit-learn}) with a warm start to use the solutions of previous iteration as a starting point for the next iteration. Figures \ref{fig:sparse20runs-loss} and \ref{fig:sparse20runs-mse} show the results for sparse regularizers.

\begin{figure}[t]%
    \centering
    \subfloat[\centering \label{fig:sparse20runs-loss}]{{\includegraphics[width=0.45\linewidth]{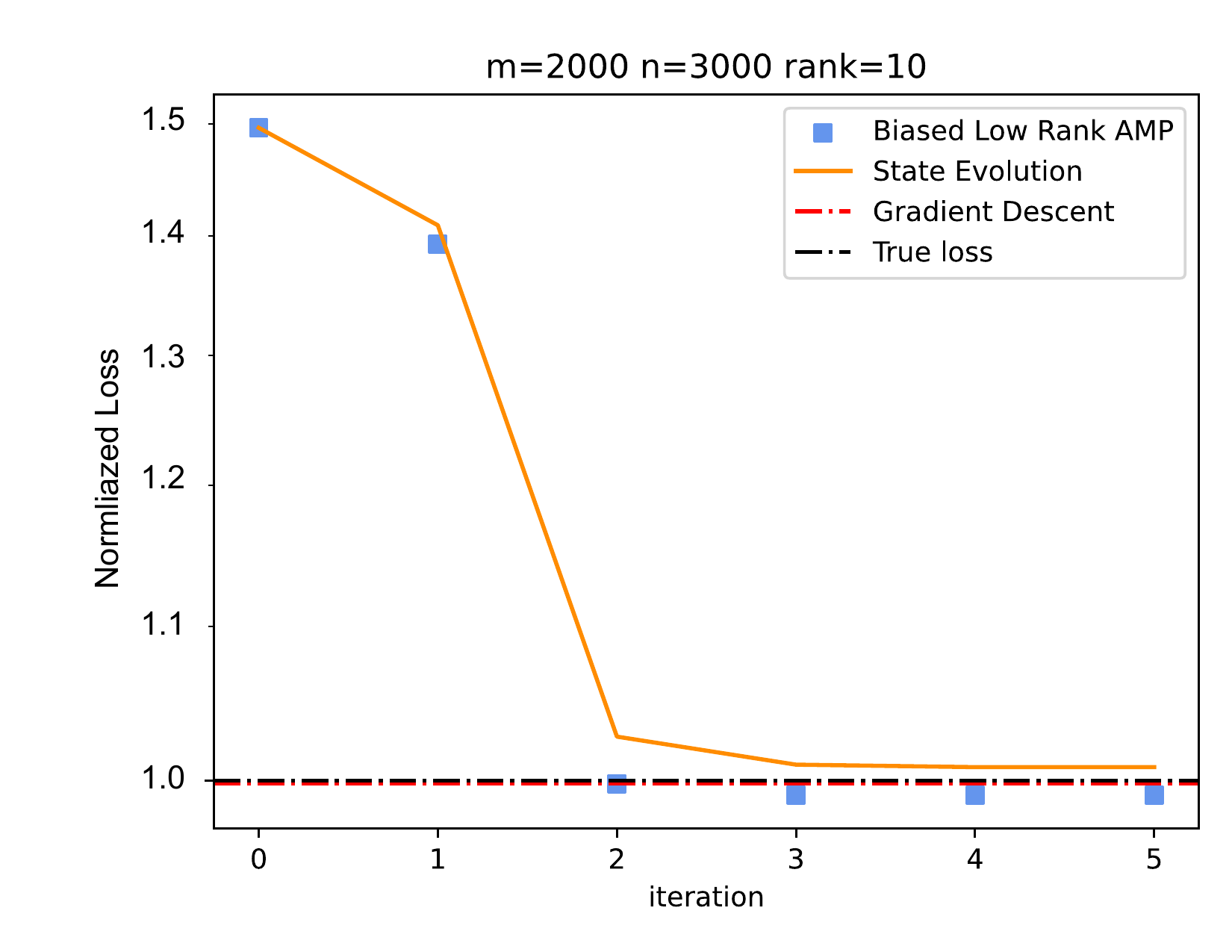} }}%
    \qquad
    \subfloat[\centering \label{fig:sparse20runs-mse}]{{\includegraphics[width=0.45\linewidth]{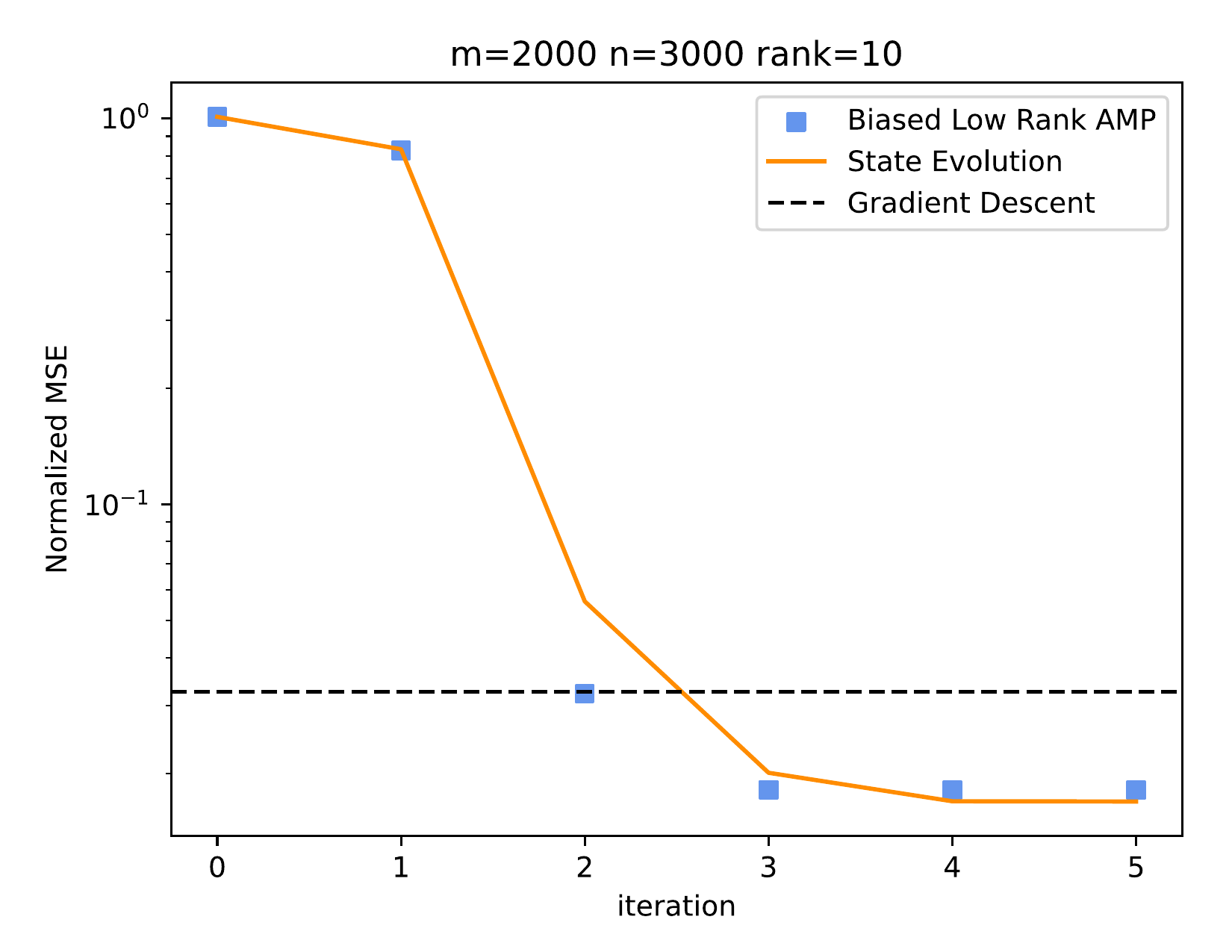} }}%
    \caption{Normalized loss (a) and MSE (b) vs iteration averaged over 20 instances, evaluated for an instance of the problem with $m=2000$, $n=3000$, $d=10$, and L1 norm regularizers.}%
    \label{fig:example2}%
\end{figure}



\subsection{MSE vs. inverse Fisher information}\label{subsec:mse-fisher}
A basic challenge in many text processing problems
is that there is a high variabilty of the 
terms.  In our model, this property is equivalent to variability in the marginal 
probabilities $P(X_1=i)$ and $P(X_2=j)$
over indices $i$ and $j$.  Presumably,
the estimation of the correlation $M_{ij} = \bs{u}_i^\intercal \bs{v}_j$ will be 
better when the $P(X_1=i)$ and $P(X_2=j)$
are higher so that there are more samples
with $(x_1,x_2) = (i,j)$.
This intuition is predicted by our model.
Specifically, the SE reveals that
the key parameter in estimation accuracy 
of $M_{ij}$ is the inverse Fisher information,
$\Delta_{ij}$ in \eqref{eq:del_ij}.
To validate this prediction,
Fig.~\ref{fig:delta-mMSE} shows a scatter plot of samples
of the normalized MSE of $M_{ij}$
vs.\ $\Delta_{ij}$ demonstrating higher
inverse Fisher information results in higher
MSE. The critical value of $\Delta$ (above which spectral algorithms fail) is computed using Marcenko Pastur theorem:
\begin{align}\label{eq:delta-critical}
        \Delta_{\rm critical} &= \frac{\lambda_{\rm max}(\Sigma^u \Sigma^v)}{(1 + \sqrt{\beta})^2}
\end{align}
where $\Sigma^u$ and $\Sigma^v$ are covariance matrices associated with zero-mean distributions $P_u$ and $P_v$ corresponding to $U$ and $V$, respectively. $\lambda_{\rm max}(.)$ is the maximum eigenvalue operator.

We note that the joint distribution of the MSE and Fisher information is well-predicted by the SE. For reference, 
we have also
plotted the results for approximately solving the quadratic minimization \eqref{eq:LAB}  via an SVD of $\tilde{Y}$,
which also matches the biased low-rank AMP in this case.

\begin{figure}[t]%
    \centering
    \subfloat[\centering \label{fig:delta-mMSE}]{{\includegraphics[width=0.48\linewidth]{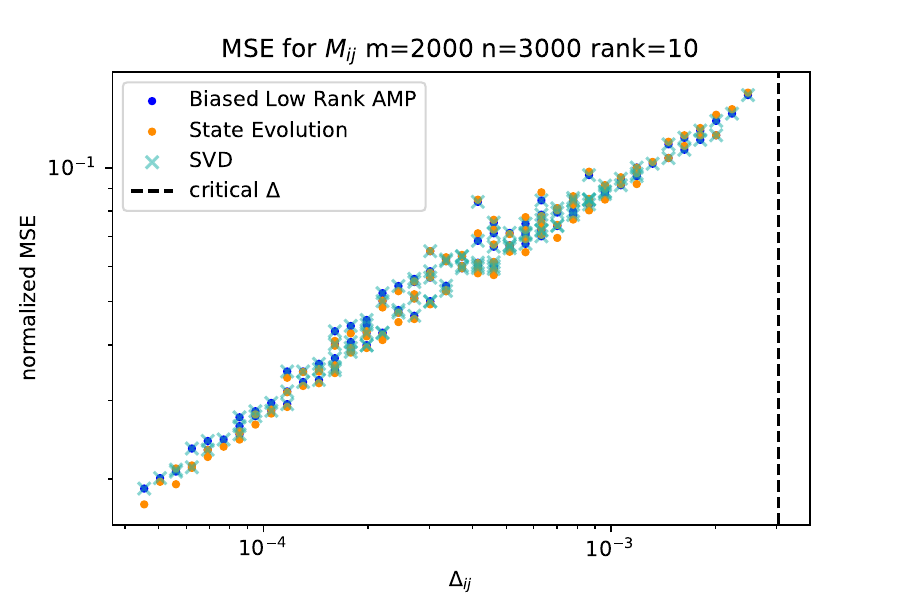} }}%
    \qquad
    \subfloat[\centering \label{fig:singular}]{{\includegraphics[width=0.42\linewidth]{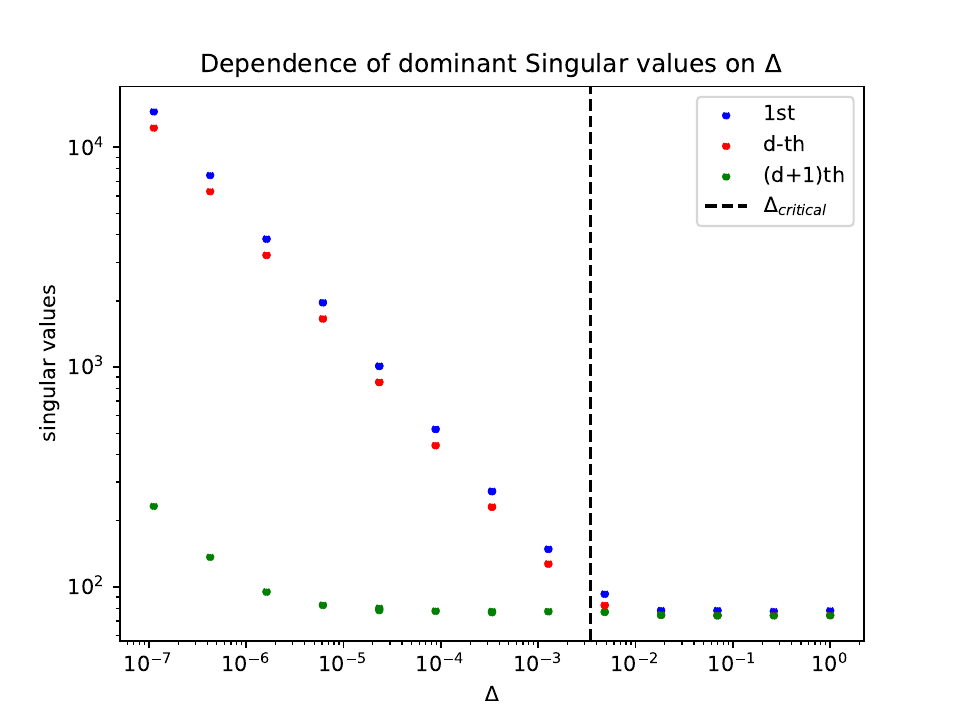} }}%
    \caption{(a) Effect of individual biases on each element of $M$. As expected, we see an increasing trend of MSE with respect to $\Delta$. (b) The dominant singular values of $\tl{Y}$ are affected by $\Delta$. If $\Delta$ exceeds the critical value, the first $d$ singular values will not be distinguishable from the other singular values.}%
    \label{fig:MSE-Fisher}%
\end{figure}

\subsection{Effect of inverse Fisher information on the singular values of the observation matrix}\label{sub:delta-vs-singular}
We show that staying below the critical inverse Fisher information \eqref{eq:delta-critical} is indeed crucial for estimation. To do so, we conduct an experiment where we set bias terms $s^u_i = u;\quad \forall i \in [m]$ and  $s^v_j = v;\quad \forall j \in [n]$ and then we vary $u$ and $v$ in the range $[0,8]$. Next, we plot the first, the $d$-th, and the $(d+1)$-th singular values of $\tl{Y}$ with respect to $\Delta = e^{-(u+v)}$. It should be noted that in this experiment, for each instance, all $M_{ij}$'s have the same bias. We set $m=1000, n=2000, d=10$. Fig. \ref{fig:singular} shows how these singular values are indistinguishable when $\Delta$ exceeds the critical value.

\begin{table*}[t]
\centering
    \caption{Parameter selection for constructing the document-word co-occurance matrix using CountVectorizer function.}  
    \label{tab:nlp}
    \begin{tabular}{ c|c|c|c|c }
            \hline
            \multicolumn{5}{c}{CountVectorizer Parameters} \\
            \hline
             \textbf{mindf} & \textbf{maxdf} & \textbf{stopwords} & \textbf{preprocessor} & \textbf{tokenizer} \\
             \hline
             $10$ & $3000$ & "english" & remove digits and special signs & lemmatization\\ 
             \hline         
    \end{tabular}         
\end{table*}

\subsection{Evaluating the algorithm on a real text dataset}
\label{simulations:real}
Finally, we apply our proposed algorithm over text data from a publicly available dataset called Large Movie Review Dataset (\cite{maas2011sent}). 
This dataset includes texts with positive and negative sentiment. We select a batch of $7000$ reviews at random and apply the following preprocessing:  
We use the ''CountVectorizer" function of the Python Scikit-learn library (\cite{scikit-learn}) to count the number of word occurances in each document. We set the parameters of this function according to Table \ref{tab:nlp}.
These selections give us $m=7000$ and $n=8139$. This co-occurance matrix will be the $Z$ that describes how many times each word occurs in each of the documents. Thus, $X_1$ and $X_2$ will refer to documents and words, respectively.  Since the ``true" embedding vectors are not known,
we first run Algorithm~\ref{alg:low-ramp}, the biased
low-rank AMP algorithm, to find an approximation of the
true embedding vectors.
We assume a rank $d=10$ and use the $L_2$ denoisers with  $\lambda=10^{-3}$ for $10$ iterations and save the final results as the ground truths $U_0$ and $V_0$. The resulting matrices might not be zero-mean, hence we subtract the row mean from each matrix. Furthermore, in order to avoid very small matrix entries, we normalize each matrix by dividing all elements by the smallest element on that matrix.


Next, we sample $m=2000$ and $n=3000$ rows of $U_0$ and $V_0$, respectively. Using these samples, we construct a new Poisson channel following section \ref{sec:poisson} to derive a new $Z$ matrix that is observed through the channel. Now, we apply algorithms \ref{alg:low-ramp} and \ref{alg:se}.
Fig. \ref{fig:real-loss} and Fig. \ref{fig:real-mse} show the resulting loss and MSE when we sample $m=2000$ and $n=3000$ from the ground truth distributions. Again, we see an excellent match between the SE and the simulations.

\begin{figure}[ht]%
    \centering
    \subfloat[\centering \label{fig:real-loss}]{{\includegraphics[width=0.45\linewidth]{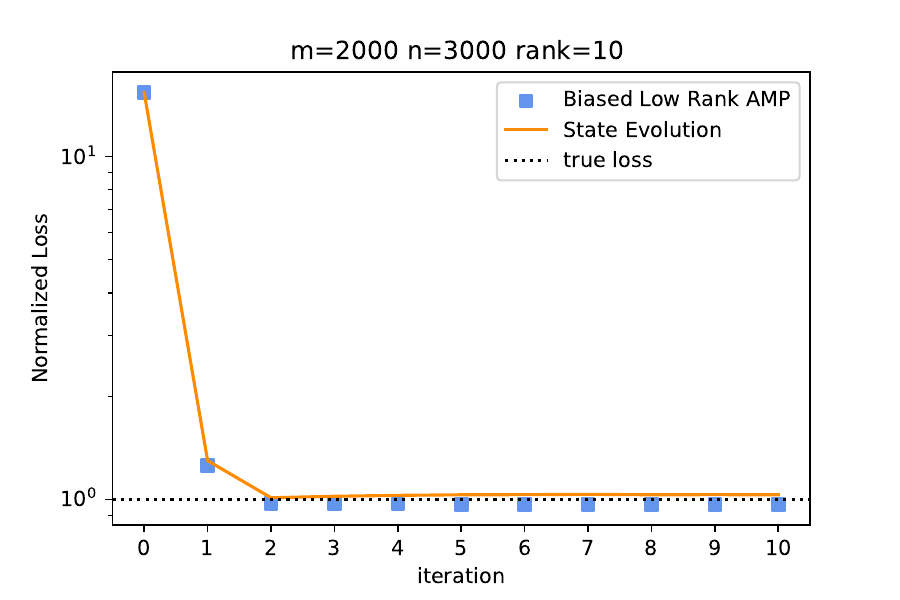} }}%
    \qquad
    \subfloat[\centering \label{fig:real-mse}]{{\includegraphics[width=0.45\linewidth]{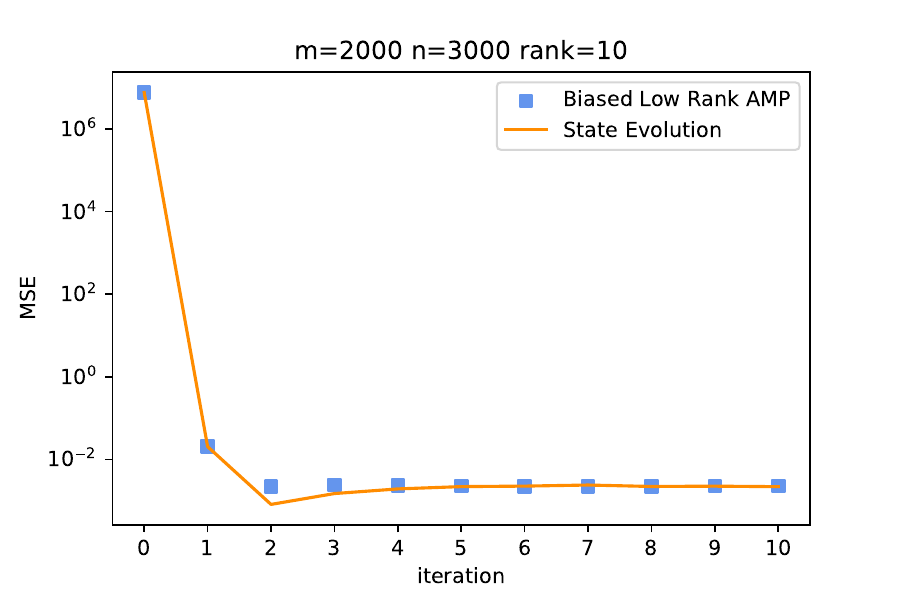} }}%
    \caption{Loss function (a) and MSE (b) vs iteration when sampling from a real dataset.}%
    \label{fig:real}%
\end{figure}
\section{Conclusions}
We have proposed a simple Poisson model
to study learning of embeddings.  Applying
an AMP algorithm to this estimation problem
enables predictions of how key parameters
such as the embedding dimension, number
of samples and relative frequency impact
embedding estimation.  Future work
could consider more complex models,
where the embedding correlations 
are described by a neural network.
Also, we have assumed that the
embedding dimension is known.  An interesting
avenue is to study the behavior of the methods
in both over and under-parameterized regimes.

\end{document}